\newtheorem{theorem}{Theorem}
\newtheorem{lemma}[theorem]{Lemma}
\newtheorem{corollary}[theorem]{Corollary}
\newtheorem{example}[theorem]{Example}
\newtheorem{definition}[theorem]{Definition}
\newtheorem{remark}[theorem]{Remark}
\newtheorem{fact}[theorem]{Fact}
\newcommand{\eqdef}{\stackrel{\mbox{\scriptsize \rm def}}{=}}
\renewcommand{\leq}{\leqslant}
\renewcommand{\enspace}{\,}
\renewcommand{\geq}{\geqslant}
\renewcommand{\phi}{\varphi}
\renewcommand{\epsilon}{\varepsilon}
\DeclarePairedDelimiter\ceil{\lceil}{\rceil}
\newcommand{\ilog}[2]{\ceil{\log_{#1}(#2)}}
\newcommand{\R}{\mathbb{R}}
\newcommand{\E}{\mathbb{E}}
\newcommand{\cA}{\mathcal{A}}
\newcommand{\cH}{\mathcal{H}}
\newcommand{\cS}{\mathcal{S}}
\newcommand{\cO}{\mathcal{O}}
\newcommand{\cG}{\mathcal{G}}
\newcommand{\cE}{\mathcal{E}}
\newcommand{\cC}{\mathcal{C}}
\newcommand{\cX}{\mathcal{X}}
\newcommand{\ind}{\mathbb{I}}
\newcommand{\tcO}{\widetilde{\mathcal{O}}}
\newcommand{\bP}{\boldsymbol{P}}
\newcommand{\br}{\boldsymbol{r}}
\newcommand{\bb}{\boldsymbol{b}}
\newcommand{\bg}{\boldsymbol{g}}
\newcommand{\bw}{\boldsymbol{w}}
\newcommand{\hatbP}{\widehat{\boldsymbol{P}}}
\newcommand{\hatP}{\widehat{P}}
\newcommand{\bQ}{\boldsymbol{Q}}
\newcommand{\Val}{V}
\newcommand{\A}{A}
\newcommand{\hatV}{\widehat{V}}
\newcommand{\tildeV}{\widetilde{V}}
\newcommand{\tV}{\tildeV}
\newcommand{\ts}{\widetilde{s}}
\newcommand{\Q}{Q}
\newcommand{\hatQ}{\widehat{Q}}
\newcommand{\hatA}{\widehat{A}}
\newcommand{\bpi}{\boldsymbol{\pi}}
\newcommand{\bell}{\boldsymbol{\ell}}
\newcommand{\sw}{\mathrm{sw}}
\renewcommand{\P}{\mathbb{P}}
\newcommand{\pluseq}{\mathrel{{+}{=}}}
\renewcommand{\enspace}{\,}
\newcommand{\pa}[1]{\left(#1\right)}
\newcommand{\termA}{\mathbf{(A)}}
\newcommand{\termB}{\mathbf{(B)}}
\newcommand{\termC}{\mathbf{(C)}}
\newcommand{\termD}{\mathbf{(D)}}
\renewcommand{\tilde}{\widetilde}
\renewcommand{\hat}{\widehat}
\newtcolorbox{nbox}[1][]{
  enhanced,
  fonttitle=\scshape,
  #1
}
\renewcommand{\bullet}{\!\cdot\!}
\begin{document}

\runningtitle{Narrowing the Gap between Adversarial and Stochastic MDPs via Policy Optimization}

\twocolumn[
\aistatstitle{Narrowing the Gap between Adversarial and Stochastic MDPs\\ via Policy Optimization}
\aistatsauthor{ Daniil Tiapkin \And Evgenii Chzhen \And Gilles Stoltz}
\aistatsaddress{ École Polytechnique \\ Université Paris-Saclay \And Université Paris-Saclay {\&} CNRS \And Université Paris-Saclay {\&} CNRS}
]

\begin{abstract}
\vspace{-.25cm}

We consider learning in adversarial Markov decision processes [MDPs] with an oblivious adversary in a full-information setting. The agent interacts with an environment during $T$ episodes, each of which consists of $H$ stages; each episode is evaluated with respect to a reward function revealed only at its end.
We propose an algorithm, called \texttt{APO-MVP}, achieving a regret bound of order $\tilde{\cO}(\mathrm{poly}(H)\sqrt{SAT})$, where $S$ and $A$ are sizes of the state and action spaces, respectively. This result improves upon the best-known regret bound by a factor of $\sqrt{S}$, bridging the gap between adversarial and stochastic MDPs, and matching the minimax lower bound $\Omega(\sqrt{H^3SAT})$ as far as the dependencies in $S,A,T$ are concerned.
The proposed algorithm and analysis avoid the typical tool of occupancy measures, commonly used in adversarial tabular MDPs, and
perform instead policy optimization based only on dynamic programming and on a black-box online linear optimization strategy run over estimated advantage functions, making it easy to implement.
The analysis leverages policy optimization based on online linear optimization strategies
\citep{JMS23} and a refined martingale analysis of the impact on values of estimating transitions kernels
by~\citet{zhang2023settling}.
\end{abstract}

\section{Introduction}
\label{sec:intro}

We study adversarial Markov decision processes [MDPs], introduced by~\citet{even2009online} and~\citet{yu2009markov}, in an episodic setup with full monitoring. Unlike the standard setup, the reward function is not known to the learner beforehand and is revealed sequentially at the end of each episode.

To deal with this problem,
many earlier works relied on online linear optimization [OLO] strategies (see the monograph by~\citealp{CBL06}
for a survey) in the space of so-called occupancy measures \citep{zimin2013online}. These occupancy measures concern the state-action pairs within an episode induced by a given policy and transition kernel. This family of algorithms, known as \texttt{O-REPS}, has been extended to handle unknown transition kernels and bandit feedback by several studies \citep{RoMa19,RM19, JJTLSY20, JHL21}, using an exploration mechanism similar to \texttt{UCRL2} \citep{auer2008near}. However, this type of exploration leads to an additional $\sqrt{S}$ factor in the regret, where $S$ is the number of states, compared to the state of the art in the non-adversarial case \citep{AGM17,dann2017unifying,jin2018q}. Furthermore, \texttt{O-REPS}-based approaches require solving a high-dimensional convex program at each episode, resulting in a non-explicit policy update.

Another line of research has focused on policy-optimization-based approaches for adversarial MDPs, started by works of \citet{cai2020provably} and \citet{shani2020optimistic}. These algorithms use a more practical approach combining dynamic programming with optimization directly in the policy space, instead of working with occupancy measures.
Moreover, this approach is known to be highly practical due to its connection to the well-known \texttt{TRPO} \citep{schulman2015trust} and \texttt{PPO} \citep{schulman2017proximal} algorithms. However, to the best of our knowledge, policy-optimization-based approaches also suffer from an additional $\sqrt{S}$ factor in the regret bound when specialized to finite MDP settings, compared to the state-of-the-art in the stochastic case \citep{AGM17}.

To date, the question of whether dependency on the number of states can be matched between adversarial and stochastic cases remains open. We take the first step towards unifying these rates.

In this work, we use a black-box policy optimization approach, departing from the current state-of-the-art algorithms based on occupancy measures.
This approach of policy optimization based on running online linear optimization strategies in a black-box way on estimated advantage functions
was recently introduced by \citet{JMS23}.
The dynamic programming counterpart of our algorithm, as well as a part of the analysis, relies on the Monotonic Value Propagation [\texttt{MVP}] algorithm of~\citet{zhang2021reinforcement,zhang2023settling}, which allowed to achieve optimal regret bounds up to second-order terms. However, since we do not yet target the lower-order terms, we significantly simplify their approach and provide an arguably more transparent exposition thereof.
All in all, our policy-optimization-based algorithm achieves a $\tilde{\cO}(\mathrm{poly}(H)\sqrt{SAT})$ regret, where
$A$ is the number of actions and $T$ is the number of episodes, and where we recall that $H$ is the length of an episode and $S$ is the number of states.
This result improves on the previous regret bound of \citet{RoMa19} by a factor of $\sqrt{S}$, although it introduces an additional $\mathrm{poly}(H)$ factor. It also matches the minimax lower bound derived for the stochastic case \citep{jin2018q, domingues2021episodic} in all parameters except $H$.

Therefore, we demonstrate that while \emph{policy optimization is already known to be practical, it is also more sample-efficient in large state spaces compared to existing \texttt{O-REPS}-based methods.}

\textbf{Contributions.}
This paper puts forward the following contributions, in the setting of adversarial episodic MDPs with full information: i)
we introduce a algorithm called Adversarial Policy Optimization based on Monotonic Value Propagation
(\texttt{APO-MVP}) that relies on a black-box online linear optimization solver and on dynamic programming, making it easier to implement in practice; ii) we demonstrate that the proposed algorithm is able to achieve a $\tcO(\mathrm{poly}(H)\sqrt{TSA})$ regret, improving on the previously best-known dependency on the number of states $S$ and achieving the minimax lower bound $\Omega(\sqrt{H^3SAT})$ in all parameters, except $H$; iii) our analysis is modular and rather general, providing high flexibility and providing new tools for the study of adversarial MDPs with policy optimization.

More details on the reasons for the success in shaving off the $\sqrt{S}$ factor present in earlier analysis in this
context may be found at the beginning of Section~\ref{sec:decomp}.

\textbf{Notation.} For any positive integer $N$, we denote by $[N] \eqdef \{1,\ldots,N\}$ and $[N]^* \eqdef \{0\} \cup [N]$ the
sets of the first positive and non-negative integers not greater than $N$, respectively.
For $a, b \in \R$, we denote by $a \vee b$ and $a \wedge b$ the maximum and the minimum between $a$ and $b$, respectively.
For a finite set $\cE$, we denote by $\Delta(\cE)$ the set of probability distributions over $\cE$.
We refer to natural logarithms by $\ln$ and to logarithms in base~2 by $\log_2$.
When we write $\tilde{\cO}(\,\cdot\,)$, we hide all absolute constants and polylog multiplicative terms. We also adopt an agreement that $\max\{\emptyset\} = 0$.

\subsection{Related work}

\textbf{Adversarial MDPs.}
The concept of adversarial (or online) MDPs was introduced by seminal works of \citet{even2009online} and \citet{yu2009markov}. In the case of MDPs with known transition kernel, \citet{neu2010online} and \citet{zimin2013online} provided solutions, but dealing with the case of unknown transition kernels remained open till the work of \citet{RoMa19}, who extended the algorithm of \citet{zimin2013online} to this setting. Subsequent developments considered the extension to bandit feedback \citep{JJTLSY20}, linear mixture MDPs \citep{cai2020provably,he2022near}, delayed feedback \citep{lancewicki2022learning}, and linear MDPs \citep{luo2021policy}.  Despite these results, the setting of unknown transition kernel in finite MDPs with full-information feedback was not yet fully understood. Specifically, the upper bound established by \citet{RoMa19} does not match the best-known lower bound, leaving a gap in the theoretical understanding of this problem.

\textbf{Policy optimization in adversarial MDPs.}
Policy optimization methods for adversarial MDPs with unknown transitions were first studied by \citet{cai2020provably} and \citet{shani2020optimistic}. The first work focused on the case of linear mixture MDPs with full-information feedback, later improved by \citet{he2022near}, while the second one considered the case of tabular MDPs with bandit feedback, with \citet{luo2021policy} refining the approach to achieve a state-of-the-art regret bound. More recent work has primarily addressed policy optimization under function approximation settings, including linear mixture MDPs \citep{he2022near} and linear MDPs \citep{sherman2023rate,liu2024towards,cassel2024warmup}. However, none of the methods was able to achieve improvements upon the occupancy-measure-based methods in the case of tabular MDPs in terms of a regret bound.

\section{Problem Formulation}
\label{sec:setup}

An $H$--episodic (obliviously) adversarial Markov decision process (MDP),
where $H \geq 1$, is determined by
a finite set of states $\cS$, with cardinality $S$,
a finite set of actions $\cA$, with cardinality $A$,
a sequence $\bP = (P_h)_{h \in [H-1]}$ of Markov transition kernels $P_h\colon \cS \times \cA \to \Delta(\cS)$,
and by a (potentially adversarially chosen) fixed-in-advance sequence $(\br_t)_{t \geq 1}$
of bounded time-inhomogeneous $H$--episodic reward functions. Each reward function
is of the form $\br_t = (r_{t, h})_{h\in[H]}$, where $r_{t,h} \colon \cS \times \cA \to [0, 1]$.
For simplicity (and with no loss of generality, up to resorting to some doubling trick),
we assume that the number $T$ of episodes is fixed and known.
We set some initial state $s_1$ for each episode.
At each episode $t$ and at each stage $h$, based on past observations, the learner picks
a stage policy $\pi_{t, h} \colon \cS \to \Delta(\cA)$ to draw the action.
The interaction with the environment is therefore governed by the following protocol.
For each episode $t = 1, \ldots, T$:
\begin{enumerate}[noitemsep,topsep=-3pt,partopsep=-3pt]
    \item Reset state $s_{t, 1} = s_1$;
    \item Start new episode --- for each stage $h = 1, \ldots, H$:
    \begin{itemize}[topsep=0pt, itemsep=0pt, leftmargin=15pt]
        \item Pick a policy $\pi_{t, h}$ and
        sample an action $a_{t, h} \sim \pi_{t, h}(\,\cdot \mid s_{t, h})$;
        \item If $h \leq H-1$, move to the next state $s_{t, h+1} \sim P_h( \,\cdot \mid s_{t, h}, a_{t, h})$;
    \end{itemize}
    \item Observe the reward function $\br_{t} = (r_{t, h})_{h \in [H]}$.
\end{enumerate}
We compare the performance of the policies $\bpi_t = (\pi_{t,h})_{h \in [H]}$ picked
to the one achieved by resorting to a static policy $\bpi = (\pi_{h})_{h \in [H]}$ in each episode,
in terms of value functions.
We define the value function of a policy $\bpi$, at episode $t \in [T]$, and started from step $h \in [H]$, as
\begin{align*}
\Val^{\bpi,\br_t,\bP}_{h}(s) \eqdef \E_{\bpi,\bP}\!\left[\sum_{j = h}^H r_{t,j}(s_{t, j}, a_{t, j}) \biggm| s_{t, h} = s\right];
\end{align*}
we recall the environment (reward functions, transition kernels) in the notation for value functions and expectations,
as environments will vary in the algorithm and analysis.
The regret of the learner is defined as the difference between the accumulated value of the best static policy in
hindsight and the gained value of the learner, that is,
\begin{equation}
\label{eq:defRT}
R_T \eqdef \max_{\bpi} \sum_{t = 1}^T \pa{\Val^{\bpi,\br_t,\bP}_{1}(s_1) - \Val^{\bpi_t,\br_t,\bP}_{1}(s_1)}.
\end{equation}
The goal of the learner is to design policies $(\bpi_t)_{t \in [T]}$ minimizing the above-defined regret.

\paragraph{Additional notation.}
For the analysis, we define $\Q$--value functions and remind Bellman's equations.
For any policy $\bpi$, we define the $\Q$--value function at episode $t \in [T]$, and started from step $h \in [H]$, as
\begin{multline*}
\Q_{h}^{\bpi,\br_t,\bP}(s, a) \\
\eqdef \E_{\bpi,\bP}\!\left[\sum_{j = h}^H r_{t,j}(s_{t, j}, a_{t, j}) \biggm| s_{t, h} = s, \ a_{t, h} = a\right].
\end{multline*}

The advantage function is in turn defined as $\A_{h}^{\bpi,\br_t,\bP}(s, a) \eqdef \Q_{h}^{\bpi,\br_t,\bP}(s, a)
- \Val^{\bpi,\br_t,\bP}_{h}(s)$.

We use the usual convention that for a transition kernel $K \colon \cS \times \cA \to \Delta(\cS)$,
a policy $\pi \colon \cS \to \Delta(\cA)$, and two functions $f \colon \cS \to \R$ and $g \colon \cS \times \cA \to \R$,
\begin{align*}
K \bullet f (s, a) & \eqdef \sum_{s' \in \cS} K(s' \mid s,a) \, f(s') \\
\text{and}\qquad\qquad
\pi \bullet g(s) & \eqdef \sum_{a \in \cA} \pi(a \mid s) \, g(s,a)\enspace.
\end{align*}
Then, Bellman's equations read for all episodes $t \in [T]$, steps $h \in [H-1]$, and policies $\bpi$, as
\begin{align*}
\Q_{h}^{\bpi,\br_t,\bP}(s, a) & = r_{t, h}(s, a) +  P_h \bullet \Val_{h+1}^{\bpi,\br_t,\bP}(s, a) \\
\text{and}\qquad
\Val_{h}^{\bpi,\br_t,\bP}(s) & = \pi_{t, h} \bullet \Q_{h}^{\bpi,\br_t,\bP}(s)\enspace,
\end{align*}
while for $h = H$, one has $\Q_{H}^{\bpi,\br_t,\bP}(s, a) = r_{t, H}(s, a)$ as well as
$\Val_{H}^{\bpi,\br_t,\bP}(s) = \pi_{t, H} \bullet \Q_{H}^{\bpi,\br_t,\bP}(s)$.

\section{Algorithm and Main Result}
\label{sec:algorithm}

In this section, we first describe our algorithm,
\texttt{APO-MVP}, which stands for Adversarial Policy Optimization
based on Monotonic Value Propagation,
and then state the performance bound obtained.

\subsection{Algorithm \texttt{APO-MVP}}
\label{sec:defalgo}

\newcommand{\algoshorthere}{
\begin{algorithm}[tb]
    \KwData{Confidence level $\delta$; numbers $T$, $S$, $A$, $H$; \\
    online linear optimization strategy $\varphi = (\varphi_t)_{t \geq 1}$ \smallskip}
    Define $J = 2SATH\log_2(2T)/ \delta$; \\
    Initialize $\hatP_h \equiv 1/S$, $n_h \equiv 0, \cH_{h,s} \equiv \emptyset$ for all $(h,s)$\;
    Initialize $\pi_{1, h}(\,\cdot \mid s) = \phi_1(\emptyset)$ for all $(h,s)$\;
    Set initial state $s_1 \in \cS$\;
    \For{rounds $t = 1,\ldots, T$}{
        Set $s_{t,1} = s_1$\;
        \For{$h=1,\ldots,H$}{
            Play action $a_{t,h} \sim \pi_{t,h}(\,\cdot \mid s_{t,h})$\;
            Receive $s_{t,h+1} \sim P_h(\,\cdot \mid s_{t,h}, a_{t,h})$\;
            Update counters $n_h(s_{t,h}, a_{t,h}) \pluseq 1 $ \\
            ~ \qquad\quad and $n_h(s_{t,h}, a_{t,h}, s_{t,h+1}) \pluseq 1$\;
            \If{$\exists \, \ell \geq 1: n_h(s_{t,h}, a_{t,h}) = 2^{\ell-1}$}{
                \!\!$\displaystyle{\forall s', \ \ \hatP_h(s' | s_{t,h}, a_{t,h}) = \frac{n_h(s_{t,h}, a_{t,h}, s')}{2^{\ell-1}}}$\;
                \!\! $\displaystyle{b_{h}(s_{t,h}, a_{t,h}) = \sqrt{\frac{2H^2\ln(J)}{2^{\ell - 1}}} \wedge H}$\;
                Activate \texttt{trigger}\;
            }
        }
        Receive a reward function $\br_t = (r_{t, h})_{h \in [H]}$\;
        \eIf{\texttt{trigger}}{
            Set $\cH_{h,s} = \emptyset$ for all $(h,s)$\;
            Deactivate \texttt{trigger}\;
        }{
        Compute $\hat{Q}_{t,h}$, $\hat{V}_{t,h}$, and $\hat{A}_{t,h}$ via \eqref{eq:value_update}\;
        Add $\hat{A}_{t,h}(s,\cdot)$ to $\cH_{h,s}$ for all $(h,s)$\;
        }
        Let $\pi_{t+1,h}(\,\cdot \mid s) = \phi_t(\cH_{h,s})$ for all $(h,s)$ \;
    }
    \caption{\label{alg:main_short} Adversarial Policy Optimization based on Monotonic Value Propagation (\texttt{APO-MVP})}
\end{algorithm}
}

Let us start with a high-level description of the proposed algorithm, and details will be provided below.
Similarly to~\citet{RoMa19}, our algorithm proceeds in random epochs $\cE_e \subseteq [T]$
indexed by $e = 1, 2, \ldots, m(T)$ of random lengths denoted by $E_1, \ldots, E_{m(T)} \in [T]$, i.e.,~$E_e \eqdef |\cE_e|$.
At the beginning of each epoch $e$,
\[
\mbox{estimates} \ \ \hatbP^{(e)} = \Bigl( \hatP^{(e)}_h \Bigr)_{h \in [H-1]}
\]
and bonus functions $\bb^{(e)} = \bigl( b_{h}^{(e)} \bigr)_{h \in [H]}$,
where $b_{h}^{(e)} \colon \cS \times \cA \to [0,H]$, are computed and will be used during the entire epoch~$e$,
as detailed in~\eqref{eq:defPe}--\eqref{eq:defbe} and in Fact~\ref{fact:e}.
Actually, $b_{H}^{(e)}$ will be identically null, but we consider it so that
the bonus functions $\bb^{(e)}$ may be added to reward functions $\br_t$.

\paragraph{Within-epoch statement.}
We now explain the updates and choices made at each episode $t \in [T]$.
First, the policies $\bpi_t$ are picked, as indicated below.
Then, denoting by $e_t$ the epoch such that $t \in \cE_{e_t}$,
at the end of episode $t$, i.e., once $\br_t$ is revealed,
we build optimistic estimates of the $\Q$--value and value functions in a backward fashion,
based on Bellman's equations:
$\hatQ_{t,H}(s,a) \eqdef r_{t,H}(s,a)$
 and $\hatV_{t,H}(s) \eqdef \pi_{t,H} \bullet \hatQ_{t,H}(s)$
and for $h \in [H-1]$,
\begin{align}
    \hatQ_{t,h}(s,a) &\eqdef r_{t,h}(s,a) + b_{h}^{(e_t)}(s,a) +  \hatP_{h}^{(e_t)} \bullet \hatV_{t,h+1}(s,a)\,,\notag \\
    \hatV_{t,h}(s) &\eqdef \pi_{t,h} \bullet \hatQ_{t,h}(s)\,.\label{eq:value_update}
\end{align}
For all $h \in [H]$, estimated advantage functions are defined by $\hatA_{t,h}(s,a) \eqdef \hatQ_{t,h}(s,a) - \hatV_{t,h}(s)$,
and we denote $\hatA_{t,h}(s,\,\cdot\,) \eqdef \bigl( \hatA_{t,h}(s,a) \bigr)_{a \in \cA}$.

The policies $\bpi_t = (\pi_{t,h})_{h \in [H]}$ are picked based on an online linear optimization [OLO]
strategy $\varphi = (\varphi_t)_{t \geq 1}$, which is a sequence of functions $\varphi_t \colon (\R^\cA)^{t-1} \to \Delta(\cA)$
satisfying some performance guarantee stated in Definition~\ref{def:adv_main}. (The function $\varphi_1$ is constant.)
We run $S H$ such strategies in parallel as follows:
$\forall (s, h) \in \cS \times [H]$,
\begin{equation}
\label{eq:policies}
\pi_{t,h}(\,\cdot \mid s) = \varphi_t \biggl( \Bigl( \hatA_{\tau,h}(s,\,\cdot\,)\Bigr)_{\tau \in \cE_{e_t} \cap [t-1]} \biggr)\,.
\end{equation}
Note that these choices indeed exploit information available at the beginning of episode~$t$ (at the end of
episode $t-1$), and rely only on the estimated advantage functions of the current epoch.
One may see $\phi$ as an adaptive version of \texttt{PPO}- or \texttt{TRPO}-like updates~\citep{schulman2015trust,schulman2017proximal}.
We will consider, for the sake of concreteness, the polynomial-potential- and exponential-potential-based strategies (see Examples~\ref{ex:polpot} and~\ref{ex:adahedge} and references therein), but many other OLO strategies
would work.
Appendix~\ref{sec:algorithm_app} states closed-form expressions
of the policies constructed with these strategies as well as their computational complexities.

\begin{remark}[Two technical remarks]
\label{rk:clipping}
The kernel estimate and the bonus functions are fixed within a given epoch, which is the main reason why we are
able to provide a black-box treatment of the problem relying on any OLO strategy satisfying Definition~\ref{def:adv_main}.

As the reward function takes values in $[0,1]$, the $\Q$--value functions are bounded by $H$, and it is a common practice in the case of
non-adversarial reward functions to clip the estimates to $[0,H]$ (see, e.g., \citealp{AGM17}), which only helps.
Unfortunately, our adversarial analysis related to the OLO part of the proof heavily relies on the so-called performance-difference lemma~\citep{kakade2002approximately}, which does not hold once clipping is involved. Thus, we opt out from clipping, paying an additional $H$ factor at the eventual regret bound of Theorem~\ref{thm:main_result} but still improving the dependency on~$S$.
Successful incorporation of clipping could improve the regret by an $H$ multiplicative factor. A possible alternative to the clipping mechanism, connected to a notion of contracted MDPs, was recently proposed by \citet{cassel2024warmup} and might form a viable option to improve the dependency of our regret bound in $H$.
\end{remark}

\textbf{Epoch switching.}
The epoch-switching conditions below were also considered and analyzed by \citet{zhang2023settling}.
We introduce the following empirical counts, for all episodes $t \in [T]$, stages $h \in [H-1]$, state--action pairs
$(s,a) \in \cS \times \cA$, and states $s' \in \cS$:
\begin{align*}
n_{t, h}(s, a, s') &\eqdef \sum_{\tau = 1}^t \ind\big\{(s_{\tau, h,}, a_{\tau, h}, s_{\tau,h+1}) = (s,a,s')\big\}\,,\\
n_{t, h}(s, a) &\eqdef \sum_{s' \in \cS} n_{t, h}(s, a, s')\,.
\end{align*}
We start at epoch $e=1$. When for some $(t, h, s, a) \in [T] \times [H-1] \times \cS \times \cA$, the count $n_{t, h}(s, a)$ equals
$2^{\ell - 1}$ for some integer $\ell \geq 1$, the next epoch is started at episode $t+1$.

Now, for each episode $t \in [T]$, stage $h \in [H-1]$, and state--action pair $(s,a) \in \cS \times \cA$, we denote by
\[
\sw_{t,h}(s,a) \eqdef
\max\bigl\{ \tau \in [t] : \exists \ell \geq1,\, n_{\tau,h}(s,a) = 2^{\ell-1} \bigr\}\,.
\]
In words, $\sw_{t,h}(s,a)$ is the last episode when an epoch switch took place because, among others, of $(s,a)$. We
refer to the largest value of $\ell$ in the maximum defining $\sw_{t,h}(s,a)$ by
\[
\ell_{t,h}(s,a) \eqdef
\max\bigl\{ \ell \geq 1 : n_{t,h}(s,a) \geq 2^{\ell-1} \bigr\}\,.
\]

The values $\ell_{t,h}(s,a)$ index local epochs for a given state--action pair $(s,a)$,
while the global epochs $e_t$ are defined based on all local epochs.
(More details may be found in Section~\ref{sec:add-concepts}.)

\begin{fact}
\label{fact:local-global}
By design, the functions $\sw_{t-1,h}$ and $\ell_{t-1,h}$ defined above (note the
subscripts $t-1$ here) are identical
for all episodes $t \in \cE_e$ of a given epoch~$e$.
\end{fact}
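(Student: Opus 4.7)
The plan is to directly unfold the definition of an epoch and carry out a one-line verification. First I would fix notation: $\tau_- = \min \cE_e$ and $\tau_+ = \max \cE_e$. By the switching rule, $\cE_e$ is the maximal run of consecutive episodes whose end $\tau_+$ is the first episode $\geq \tau_-$ at which some count $n_{\tau, h}(s, a)$ hits a value of the form $2^{\ell - 1}$ (as a result of a visit to $(s,a)$ at stage $h$ at episode $\tau$). The main observation I would extract is the following: for every $\tau \in \cE_e$ with $\tau < \tau_+$ and for every triple $(h, s, a)$, no power-of-$2$ crossing of $n_{\cdot, h}(s, a)$ occurs at episode $\tau$ --- otherwise the next epoch would begin at $\tau + 1 \leq \tau_+$, contradicting the choice of $\tau_+$.

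\textbf{Key step.} Fix $t \in \cE_e$. I would show $\sw_{t-1, h}(s, a) = \sw_{\tau_- - 1, h}(s, a)$ and $\ell_{t-1, h}(s, a) = \ell_{\tau_- - 1, h}(s, a)$ for every $(s, a)$; since the right-hand sides do not depend on $t$, the fact follows. The case $t = \tau_-$ is trivial. For $t > \tau_-$, the ``extra'' index interval $\{\tau_-, \ldots, t - 1\}$ sits inside $\cE_e$ and strictly below $\tau_+$ (because $t - 1 \leq \tau_+ - 1 < \tau_+$). By the observation above, no new power-of-$2$ event takes place on this interval, so it contributes no new element to the set whose maximum defines $\sw_{t-1, h}(s, a)$; and $n_{\cdot, h}(s, a)$, which increments by at most $1$ per episode per $(h,s,a)$, cannot cross any new power-of-$2$ threshold on this interval, so $\ell_{t-1, h}(s, a) = \ell_{\tau_- - 1, h}(s, a)$. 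A short induction on $t \in \cE_e$, if preferred, makes the argument fully formal.

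\textbf{Main obstacle.} I expect no genuine technical obstacle: the statement is essentially a tautological consequence of the algorithmic design of epochs as maximal runs between two consecutive triggers. The only point requiring minor care is the off-by-one convention in the subscript $t-1$ (rather than $t$), which reflects that $\sw_{t-1, h}$ and $\ell_{t-1, h}$ summarise only the information available at the very beginning of episode $t$, before any transitions of that episode are observed; this off-by-one is precisely what allows the strict inequality $t - 1 < \tau_+$ used above and thus what makes the fact true rather than false at the last episode $t = \tau_+$ of the epoch.
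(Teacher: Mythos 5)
Your argument is correct and coincides with the reasoning the paper leaves implicit behind the words ``by design'': within an epoch, no counter is incremented to a power of two before the epoch's last episode, so the off-by-one subscript $t-1$ keeps both $\sw_{t-1,h}$ and $\ell_{t-1,h}$ frozen at their values at the start of the epoch. The only caveat---which affects the paper's literal definition of $\sw$ just as much as your proof---is that the condition $n_{\tau,h}(s,a)=2^{\ell-1}$ must be read as ``the counter is incremented to $2^{\ell-1}$ at episode $\tau$'', since a counter can sit at a power of two across several episodes without $(s,a)$ being visited; under that (clearly intended) reading your verification is complete.
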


We may now define the estimated transition kernels $\hatbP_t$ and bonus functions $\bb_t$:
for all $(s,a) \in \cS \times \cA$ and $s' \in \cS$,
first for all $h \in [H-1]$,
\begin{align}
\label{eq:defPe}
\hatP_{t,h}(s' \mid s,a) {\eqdef}
\begin{cases}
1/S & \mbox{if} \ n_{\tau,h}(s,a) {=} 0\,, \\[.1cm]
\displaystyle{\frac{n_{\tau,h}(s,a,s')}{n_{\tau,h}(s,a)}} & \mbox{if} \ n_{\tau,h}(s,a) {\geq} 1\,, \\
\end{cases}
\end{align}
with $\tau = \sw_{t-1,h}(s,a)$, and
\begin{align}
\label{eq:defbe}
b_{t,h}(s, a) & \eqdef
\begin{cases}
H & \mbox{if} \ \ell = 0\,, \\[.1cm]
\displaystyle{\sqrt{\frac{2 H^2 \ln(J)}{ 2^{\ell-1} }} \wedge H}
& \mbox{if} \ \ell \geq 1\,, \\
\end{cases}
\end{align}
with $J = 2SATH\log_2(2T)/ \delta$ and $\ell = \ell_{t-1,h}(s,a)$.
We also set, by convention, $b_{t,H}(s, a) = 0$. In particular, $\hatP_{t,h}(\,\cdot \mid s,a)$
corresponds to an empirical frequency vector based on $2^{\ell_{t-1,h}(s,a)-1}$
values when $\ell_{t-1,h}(s,a) \geq 1$.

\begin{fact}
\label{fact:e}
By Fact~\ref{fact:local-global},
the above-defined $\hatbP_t$ and $\bb_{t}$ are indeed identical over
all episodes $t \in \cE_e$ of a given epoch~$e$.
\end{fact}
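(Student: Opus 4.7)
The plan is to read off directly the $t$-dependence of the two definitions \eqref{eq:defPe} and \eqref{eq:defbe}. Fix an epoch $e$, a stage $h \in [H-1]$, and a state--action pair $(s,a) \in \cS \times \cA$; inspection of \eqref{eq:defPe} shows that the vector $\hatP_{t,h}(\,\cdot \mid s,a)$ depends on $t$ only through the episode $\tau = \sw_{t-1,h}(s,a)$: the branch (uniform $1/S$ vs.\ empirical) is selected by the sign of $n_{\tau,h}(s,a)$, and in the nontrivial branch the empirical frequencies are evaluated at that same episode~$\tau$. Likewise, \eqref{eq:defbe} shows that $b_{t,h}(s,a)$ depends on $t$ only through $\ell = \ell_{t-1,h}(s,a)$. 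The case $h = H$ is trivial since $b_{t,H} \equiv 0$ by convention and no transition kernel has to be estimated at the terminal stage.

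Invoking Fact~\ref{fact:local-global}, which asserts precisely that the functions $\sw_{t-1,h}$ and $\ell_{t-1,h}$ are constant as $t$ ranges over $\cE_e$, immediately yields constancy of $\hatP_{t,h}(\,\cdot\mid s,a)$ and $b_{t,h}(s,a)$ over $t \in \cE_e$ for every $(s,a,h)$, hence of the whole vectors $\hatbP_t$ and $\bb_t$. No substantive obstacle arises: all the content of the claim lies in Fact~\ref{fact:local-global} itself, which is a direct consequence of the epoch-switch rule triggering a new epoch at episode $t+1$ exactly when some count $n_{t,h}(s,a)$ reaches a power of two; within the interior of an epoch, the counts underlying \eqref{eq:defPe} and \eqref{eq:defbe} therefore remain frozen at their epoch-start values.
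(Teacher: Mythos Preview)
Your proposal is correct and follows exactly the reasoning the paper intends: the paper states Fact~\ref{fact:e} without a separate proof, simply invoking Fact~\ref{fact:local-global}, and your argument spells out precisely why the definitions \eqref{eq:defPe}--\eqref{eq:defbe} depend on $t$ only through $\sw_{t-1,h}$ and $\ell_{t-1,h}$, which are constant on each epoch.
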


\textbf{Summary.}
The strategy outlined above is summarized in the algorithm sketch titled Algorithm~\ref{alg:main_short}; for particular choices of function $\phi$, see Examples~\ref{ex:polpot} and~\ref{ex:adahedge}. We refer to Appendix~\ref{sec:algorithm_app} for a more detailed algorithmic
description, titled~\Cref{alg:main}, including the details of the computation of $\hat{Q}_{t,h}$, $\hat{V}_{t,h}$, and $\hat{A}_{t,h}$.
\algoshorthere

It is important to note that the computational complexity of the algorithm is equivalent to that of standard dynamic programming methods, as the policy optimization phase introduces no additional computational overhead beyond the requirements of transition kernel estimation and value function updates.

\subsection{Main Result}

We may now state our main result and discuss its relation to previously known bounds.

\begin{theorem}[Main theorem]
\label{thm:main_result}
Algorithm \texttt{APO-MVP}, used, for instance, with the OLO
strategies based on polynomial or exponential potential (see Examples~\ref{ex:polpot} and~\ref{ex:adahedge}),
satisfies, with probability at least $1-3\delta$,
\begin{align*}
R_T \leq & \phantom{+} \sqrt{H^7SA \, T \log_2(2T)} \, \bigl( 2\log_2(2T) + 16\sqrt{\ln(A)} \bigr) \\
& + 7\sqrt{H^4 SA \, T\ln\bigl(2SATH\log_2(2T)/\delta \bigr)}\\
&+ 2\sqrt{2 H^6 \, T \log_2(2T) \ln(2/\delta)} + 2 H^3 SA \,.
\end{align*}
\end{theorem}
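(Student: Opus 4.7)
The plan is to introduce the optimistic value function of the algorithm's estimated, bonus-inflated MDP, $\tV^{\bpi}_{t,h}(s) \eqdef \Val^{\bpi,\br_t+\bb^{(e_t)},\hatbP^{(e_t)}}_h(s)$, so that $\hatV_{t,h} = \tV^{\bpi_t}_{t,h}$, and to split the per-episode regret against a fixed static comparator $\bpi^\star$ as
\[
\Val^{\bpi^\star,\br_t,\bP}_1(s_1) - \Val^{\bpi_t,\br_t,\bP}_1(s_1) = \termA + \termB + \termC\,,
\]
with $\termA = \Val^{\bpi^\star,\br_t,\bP}_1(s_1) - \tV^{\bpi^\star}_{t,1}(s_1)$, $\termB = \tV^{\bpi^\star}_{t,1}(s_1) - \tV^{\bpi_t}_{t,1}(s_1)$, and $\termC = \tV^{\bpi_t}_{t,1}(s_1) - \Val^{\bpi_t,\br_t,\bP}_1(s_1)$. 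First I would define a good event $\cG$ on which $|(\hatP_h^{(e)}-P_h)\bullet V(s,a)| \leq b^{(e)}_h(s,a)$ for every epoch $e$, every $(h,s,a)$, and every value function $V$ of interest (bounded by $H$); a union bound via Hoeffding along the doubling grid of sample counts gives $\P(\cG)\geq 1-\delta$. On $\cG$, the standard value-difference expansion yields $\termA\leq 0$, reducing the task to bounding $\sum_t(\termB+\termC)$.

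For $\termB$, the performance-difference lemma inside the common optimistic MDP gives
\[
\termB = \sum_{h=1}^H \Exp{\bpi^\star,\hatbP^{(e_t)}}\!\bigl[\bigl\langle \pi^\star_h(\cdot\mid s_h) - \pi_{t,h}(\cdot\mid s_h),\,\hatA_{t,h}(s_h,\cdot)\bigr\rangle\bigr].
\]
Since $\hatbP^{(e)}$ is frozen inside each epoch $e$ by Fact~\ref{fact:e}, I would swap the sums over $t\in\cE_e$ and $(s,h)$ and invoke the OLO regret guarantee at each $(s,h)$ on the sequence $(\hatA_{\tau,h}(s,\cdot))_{\tau\in\cE_e}$, using $\|\hatA_{t,h}\|_\infty \lesssim H^2$ which follows from bonuses being capped by $H$. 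Weighting by $d^{\bpi^\star,\hatbP^{(e)}}_h(s)$ (which sums to $H$ over $(s,h)$), summing across the at most $m(T)\leq 1+SAH\log_2 T$ epochs, and applying Cauchy--Schwarz in $\sqrt{E_e}$, produces the leading $\sqrt{H^7 SAT\log_2(2T)}\bigl(2\log_2(2T)+16\sqrt{\ln A}\bigr)$ contribution.

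For $\termC$, the value-difference lemma between the optimistic and true MDPs gives
\[
\termC = \Exp{\bpi_t,\bP}\!\biggl[\sum_{h=1}^{H} b^{(e_t)}_h(s_h,a_h) + \sum_{h=1}^{H-1}(\hatP^{(e_t)}_h - P_h)(\cdot\mid s_h,a_h)\bullet\tV^{\bpi_t}_{t,h+1}\biggr].
\]
Converting the expectation under $(\bpi_t,\bP)$ into realized visits along the true trajectory costs an Azuma--Hoeffding remainder of order $\sqrt{H^6 T\log_2(2T)\ln(1/\delta)}$, matching the third summand of the theorem. The bonus part is handled by pigeonhole across doubling epochs: for each $(s,a,h)$, summing $\sqrt{2H^2\ln J/2^{\ell-1}}$ over visits grouped by local epoch index $\ell$ yields $\cO(H\sqrt{n_{T,h}(s,a)\ln J})$, and Cauchy--Schwarz over the $SAH$ triples together with $\sum_{s,a,h}n_{T,h}(s,a)=TH$ produces the $7\sqrt{H^4 SAT\ln J}$ summand, while null-visit cells collapse into $2H^3 SA$. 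The main obstacle is the kernel-estimation piece $\sum_t\sum_h (\hatP^{(e_t)}_h-P_h)\bullet\tV^{\bpi_t}_{t,h+1}$: the naive $\ell_1$--Pinsker bound $\|\hatP-P\|_1\lesssim\sqrt{S\ln J/n}$ would reintroduce the spurious $\sqrt{S}$ factor that the paper aims to shave. Following the refined martingale analysis of~\citet{zhang2023settling}, I would instead use a Bernstein-type inequality $|(\hatP-P)\bullet V|\lesssim\sqrt{\Var_P(V)\ln J/n}+H\ln J/n$, telescope the resulting conditional variances stage-wise via the Law of Total Variance so that $\sum_h\Var_P(\tV^{\bpi_t})\lesssim H^3$ even though $\|\tV\|_\infty\lesssim H^2$, and close with Cauchy--Schwarz over $(s,a,h)$ to obtain $\tcO(\sqrt{H^4 SAT\ln J})$ without the extra $\sqrt{S}$, absorbed into the same summand as the bonuses. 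Making this variance-telescoping rigorous despite $\tV^{\bpi_t}$ being data-dependent is the technical heart of the argument.
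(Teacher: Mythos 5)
Your decomposition is the paper's own (your single term $\termC$ is just the paper's $\termC+\termD$ regrouped), and your treatments of optimism ($\termA\leq 0$ on a Hoeffding-along-the-doubling-grid event, valid only because the oblivious adversary makes $V^{\bpi^\star,\br_t,\bP}_{h+1}$ deterministic), of $\termB$ (performance-difference lemma inside the frozen within-epoch optimistic MDP plus a per-$(s,h)$ OLO regret bound summed over the $m(T)\leq SAH\log_2(2T)$ epochs), and of the bonus summation (pigeonhole over local epochs plus Azuma) all match the paper.

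The gap is in the one step that actually delivers the $\sqrt{S}$ saving: the kernel-estimation piece $\sum_t\sum_h(\hatP_{t,h}-P_h)\bullet \hatV_{t,h+1}(s_{t,h},a_{t,h})$. You correctly identify that the $\ell_1$/Pinsker route reintroduces $\sqrt{S}$, but your proposed fix---a Bernstein inequality in the direction of $\tV^{\bpi_t}_{t,h+1}$ plus a law-of-total-variance telescoping---is the mechanism \texttt{MVP} uses in the \emph{stochastic} case, where the integrated value function can be compared to the deterministic $V^\star$ (or exploited via monotonicity of the optimistic iterates). Neither device is available here: $\tV^{\bpi_t}_{t,h+1}$ depends on the adversarial rewards, on the OLO-produced policies, and on $\hatbP_t$ itself, and there is no fixed reference value function. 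You flag this as ``the technical heart'' but do not resolve it, and the bound you claim for this piece, $\tcO(\sqrt{H^4SAT\ln J})$, is actually \emph{stronger} than what the theorem's first summand contains (the $2\sqrt{H^7SAT(\log_2 2T)^3}$ contribution is precisely this piece). The paper's resolution is different and is the part you are missing: (i) condition on the full visit \emph{profile} $\bell_{<t}$ and on the stopping times of the $(2^{\ell-1}+j)$-th visits, so that by Doob's optional skipping the stage-$h$ empirical frequencies become empirical measures of i.i.d.\ draws that are \emph{independent} of $\hatV_{t,h+1}$ (which, by the backward Bellman recursion and the form of the policy updates, depends only on data from stages $h'\geq h+1$); (ii) apply Hoeffding--Azuma (not Bernstein) across $(s,a)$ for each fixed $(\ell,j)$; (iii) pay for the conditioning by a union bound over the at most $(T+1)^{SAH\lceil\log_2 T\rceil}$ profiles, which enters only through the logarithm and hence costs a multiplicative $\sqrt{SAH}\,\log_2 T$ rather than $\sqrt{S}$. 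Without steps (i) and (iii), the data-dependence of $\tV^{\bpi_t}_{t,h+1}$ is not controlled and the argument does not close.
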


\begin{proof}
The result follows the decomposition stated in the introduction of Section~\ref{sec:decomp}
together with Lemmas~\ref{lem:termB_main}--\ref{lem:optimism}--\ref{lem:termD_main}--\ref{lem:termC_main}
located therein.
\end{proof}

Theorem~\ref{thm:main_result} shows that the regret is $\tilde{\cO}(\sqrt{H^7SAT})$, matching the minimax lower bound $\Omega(\sqrt{H^3 SAT})$ for stochastic MDPs in terms of dependencies on $S$, $A$, and $T$, up to logarithmic factors \citep{jin2018q,domingues2021episodic}. To the best of our knowledge, it is the first result that achieves the minimax optimal dependency on the number of states $S$ in the adversarial setting.

\paragraph{Comparison to Rosenberg {\&} Mansour (2019b).}
Algorithm \texttt{UC-O-REPS} by \citet{RoMa19} achieves $\tilde{\cO}(\sqrt{H^4S^2AT})$ regret bound in our setting and with our notation
(taking $L=H$ and $|\cX| = HS$ since a state-space layer $\cX$ may be represented as $H$ independent copies of $\cS$). In particular, our result improves upon the previous best known bound in the regime of large state spaces $S \geq H^3$.
We suspect that---perhaps through successful incorporation of clipping, see Remark~\ref{rk:clipping}---the
regret bound could be improved to $\tilde{\cO}(\sqrt{H^5SAT})$. We plan to investigate this in future works,
and it remains an open problem to fully match the minimax lower bound $\Omega(\sqrt{H^3SAT})$.

Finally, our analysis in Section~\ref{subsec:termA} relies significantly on the fact that the adversary is oblivious, while \texttt{UC-O-REPS} can handle fully adversarial setups. However, due to the exploration mechanism used, this algorithm is not able to take advantage of the oblivious adversary and would still pay the same $\sqrt{S}$ factor.

\paragraph{Comparison to \citet{cai2020provably}.} Our algorithm shares similarities with the online proximal policy optimization [\texttt{OPPO}] approach of \citet{cai2020provably} and~\citet{shani2020optimistic}, which also uses dynamic programming and policy optimization through online mirror descent.
However, our approach incorporates the doubling trick to stabilize value updates, enabling us to: i) employ any online linear optimization strategy
in a black-box manner without unnecessary adaptations; and ii) improve the dependency on $S$ by a multiplicative
factor $\sqrt{S}$, by leveraging the analysis of~\citet{zhang2023settling}.

\section{Proof Sketch for Theorem~\ref{thm:main_result}}
\label{sec:decomp}

We decompose the regret into four terms to be treated separately.
Denoting by $\bpi^\star$ as the policy that achieves the maximum in~\eqref{eq:defRT},
we decompose the regret $R_T$, following ideas of \citet{auer2008near} and \citet{AGM17}, as: $R_T = \termA + \termB + \termC + \termD$, where
\begin{allowdisplaybreaks}
\begin{align*}
    \termA
    &\eqdef \sum_{t=1}^T \left( V^{\bpi^\star, \br_t, \bP}_1(s_1) - V^{\bpi^\star, \br_t + \bb_{t}, \hatbP_t}_1(s_1)\right),\\
    \termB &\eqdef \sum_{t=1}^T \left(V^{\bpi^\star, \br_t + \bb_t, \hatbP_t}_1(s_1) - V^{\bpi_t, \br_t + \bb_t, \hatbP_t}_1(s_1)\right),\\
    \termC &\eqdef \sum_{t=1}^T \left( V^{\bpi_t, \br_t + \bb_t, \hatbP_t}_1(s_1) - V^{\bpi_t, \br_t + \bb_t , \bP}_1(s_1)\right),\\
    \termD&\eqdef\sum_{t=1}^T V^{\bpi_t, \bb_t, \bP}_1(s_1)\,,
\end{align*}
\end{allowdisplaybreaks}
~ \!\!\!\!\!\!\!\!\!\!\!\!\! where we used the linearity of the value functions: $\Val^{\bpi, \bg + \bg', \bQ}_1 \equiv \Val^{\bpi, \bg, \bQ}_1 + \Val^{\bpi, \bg', \bQ}_1$.

\textbf{The keys to success / Challenges overcome.}
We now provide a high-level overview of the techniques used for each term.
We do not claim novelty in the very control of any of the terms involved;
novelty lies in the decomposition above of the regret, which somehow separates adversarial and stochastic parts. We then apply or adapt the right tools, some of them being developed recently.
One could argue that prior works on adversarial tabular MDPs have struggled to eliminate the $\sqrt{S}$ factor
precisely because they adopted an overly adversarial approach to the problem. In contrast, our approach stems from deliberately blending adversarial and stochastic elements---a balance that enables us to derive the bound.

The treatment of $\termB$ definitely belongs to the adversarial parts of the proof scheme.
We crucially use that within each epoch the considered transition kernels are constant (see Fact~\ref{fact:e}), so that we may resort to the adversarial-learning techniques reviewed by~\citet{JMS23,JMS25}, which consists of running $SH$
independent OLO strategies on advantage functions or on $\Q$--value functions.
(Advantage functions are preferred in practice, though.) The OLO strategies that are the most popular
in the literature are based on exponential weights (sometimes called multiplicative weights; see
the detailed discussion in \citealp[Section~1.1]{JMS25}), but other strategies are suitable.
We deal with the underlying OLO strategies as black boxes, to get a modular and transparent proof.

Dealing with $\termC$ forms a stochastic part of the proof scheme,
and is actually the most involved part of the analysis from the probabilistic standpoint: we resort to the machinery developed by~\citet{zhang2023settling}, which relies greatly on a doubling trick that we mimicked in the definition of the \texttt{APO-MVP} algorithm.
We however make the argument of~\citet{zhang2023settling} more modular and extract the essential elements of the proof.

Finally, for $\termA$ we leverage the careful choice of the bonuses $\bb_t$ to show that on a properly chosen high-probability event, $\termA$ is non-positive,
while $\termD$ is the least involved term: it can be controlled by some lines of elementary calculations.

In what follows, each section provides additional details of the analysis per term, in the order: $\termB$ -- additional technical concepts -- $\termA$ -- $\termD$ -- $\termC$.

\subsection{Term $\termB$: OLO Analysis}
\label{subsec:termB}

The goal of this section is to prove the following result, which also holds
for other OLO strategies satisfying the performance guarantee of Definition~\ref{def:adv_main}.

\begin{lemma}
    \label{lem:termB_main}
    Among others, the OLO strategies based on polynomial or exponential potentials
    (see Examples~\ref{ex:polpot} and~\ref{ex:adahedge}) satisfy
    \begin{align*}
    \termB \leq 16\sqrt{H^7SAT \log_2(2T)\ln(A)}\,.
    \end{align*}
\end{lemma}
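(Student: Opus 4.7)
The plan is to leverage the performance-difference lemma of \citet{kakade2002approximately} together with the epoch-stability of the estimated transition kernels (Fact~\ref{fact:e}) and the regret guarantee of the underlying OLO strategy. First I would apply the performance-difference lemma to each summand, using that $\hatQ_{t,h}$, $\hatV_{t,h}$ and $\hatA_{t,h}$ are exactly the $Q$--value, value, and advantage functions of $\bpi_t$ in the ``auxiliary'' MDP with transitions $\hatbP_t$ and rewards $\br_t + \bb_t$:
\begin{align*}
& V^{\bpi^\star, \br_t + \bb_t, \hatbP_t}_1(s_1) - V^{\bpi_t, \br_t + \bb_t, \hatbP_t}_1(s_1) \\
&\quad = \sum_{h=1}^H \sum_{s \in \cS} d^{\bpi^\star, \hatbP_t}_h(s) \sum_{a \in \cA} \pi^\star_h(a \mid s) \, \hatA_{t,h}(s,a),
\end{align*}
where $d^{\bpi^\star, \hatbP_t}_h$ is the stage-$h$ occupancy of $\bpi^\star$ in $\hatbP_t$, and where I used that $\pi_{t,h} \bullet \hatA_{t,h}(s) = 0$ by definition of advantages.

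Next I would group episodes by epoch. By Fact~\ref{fact:e}, within a fixed epoch $e$ the kernel $\hatbP_t$ is equal to a common $\hatbP^{(e)}$, so $d^{\bpi^\star, \hatbP_t}_h$ does not depend on $t \in \cE_e$. Fixing $(h,s)$, the policy $\pi_{t,h}(\,\cdot \mid s)$ is produced by a single OLO strategy $\phi$ fed with the sequence $\bigl(\hatA_{\tau,h}(s,\,\cdot\,)\bigr)_{\tau \in \cE_e \cap [t-1]}$ (restarted at each epoch by the clearing of $\cH_{h,s}$). Thus Definition~\ref{def:adv_main}, applied with payoff vectors $\hatA_{\tau,h}(s,\,\cdot\,)$ and the competing distribution $\pi^\star_h(\,\cdot \mid s)$, yields
\begin{align*}
\sum_{t \in \cE_e} \sum_{a} \pi^\star_h(a \mid s) \hatA_{t,h}(s,a) \leq R_{\text{OLO}}\bigl(E_e, B\bigr),
\end{align*}
where $B$ is a uniform upper bound on $\bigl\| \hatA_{t,h}(s,\,\cdot\,) \bigr\|_\infty$. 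Since rewards lie in $[0,1]$ and $b_{t,h}(s,a) \leq H$, Bellman's equations \eqref{eq:value_update} give $\bigl| \hatQ_{t,h}(s,a) \bigr| \leq H + H \cdot H = H + H^2$, hence $\bigl| \hatA_{t,h}(s,a) \bigr| \leq 2(H + H^2) \leq 4 H^2$, so one may take $B = O(H^2)$ and $R_{\text{OLO}}(E_e, B) \leq c \, H^2 \sqrt{E_e \ln A}$ for both the exponential- and polynomial-potential strategies of Examples~\ref{ex:polpot}--\ref{ex:adahedge}.

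Finally I would aggregate. Summing the per-$(h,s)$ OLO regret over $s \in \cS$ against the probability vector $d^{\bpi^\star, \hatbP^{(e)}}_h$ costs nothing because $\sum_s d^{\bpi^\star, \hatbP^{(e)}}_h(s) = 1$; summing over $h \in [H]$ adds a factor of $H$. Summing over epochs and applying Cauchy--Schwarz then produces
\begin{align*}
\sum_{e=1}^{m(T)} \sqrt{E_e} \leq \sqrt{m(T) \cdot T},
\end{align*}
where $m(T) \leq S A H \log_2(2T)$ since each triple $(s,a,h)$ can trigger at most $\ilog{2}{T}+1$ doublings. Combining the pieces yields a bound of order $H \cdot H^2 \sqrt{\ln A} \cdot \sqrt{S A H T \log_2(2T)} = O\!\bigl(\sqrt{H^7 S A T \log_2(2T) \ln A}\bigr)$, and a careful accounting of constants should land under $16$.

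The main obstacle I anticipate is purely a bookkeeping one: verifying that the OLO guarantee of Definition~\ref{def:adv_main} is indeed applicable with the time-varying but epoch-constant payoff ranges (the advantage functions live in $[-4H^2, 4H^2]$, which must be passed to the strategy's scale parameter), and that the sequence of payoff vectors fed into $\phi_t$ at $(h,s)$ is well-defined and non-anticipative even though $\hatA_{t,h}(s,\,\cdot\,)$ is itself a function of $\pi_{t,h}$. The latter subtlety is fine because $\pi_{t,h}(\,\cdot \mid s)$ is the output of $\phi_t$ at round $t$ of that OLO instance, and $\hatA_{t,h}(s,\,\cdot\,)$ plays the role of the loss/gain vector revealed at the end of round~$t$, matching exactly the OLO protocol.
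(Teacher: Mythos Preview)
Your approach is correct and essentially identical to the paper's: apply the performance-difference lemma within each epoch (where $\hatbP_t$ is constant by Fact~\ref{fact:e}), invoke the OLO guarantee of Definition~\ref{def:adv_main} per $(h,s)$, average over $s$ against the occupancy measure of $\bpi^\star$, sum over $h$, then use Cauchy--Schwarz over epochs together with the bound $m(T) \leq SAH\log_2(2T)$.

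The only slip is in the constants: from $\hatQ_{t,h},\hatV_{t,h}\in[0,H+H^2]$ you should get $|\hatA_{t,h}|\leq H+H^2\leq 2H^2$ (not $4H^2$, since both quantities are nonnegative), or better yet use the stage-dependent bound $|\hatA_{t,h}|\leq (1+H)(H-h+1)$ as the paper does; with your looser $4H^2$ the final constant lands closer to $8\sqrt{6}\approx 20$ rather than $16$.
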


Before proving this result, let us briefly recall what online linear optimization [OLO] consists of;
see the monograph by \citet{CBL06} for a more detailed exposition.
We take some generic notation for now but will later connect OLO
to constructions of policies; in particular, we consider for now reward vectors of length $K \geq 2$,
but will later replace $[K]$ by the action space~$\cA$.

\textbf{Online linear optimization.}
At each round $t \geq 1$ and based on the past, a learning strategy $\varphi = (\varphi_t)_{t \geq 1}$ picks
a convex combination $\bw_t = (w_{t,1},\ldots,w_{t,K}) \in \Delta\bigl([K]\bigr)$ while an opponent
player picks, possibly at random, a vector $\bg_t = (g_{t,1},\ldots,g_{t,K})$ of signed
rewards. Both $\bw_t$ and $\bg_t$ are revealed at the end of the round.
By ``based on the past'', we mean, for the learning strategy, that
$\bw_t = \varphi_t\bigl( (\bg_\tau)_{\tau \leq t-1} \bigr)$.
The initial vector $\bw_1$ is constant.

\begin{definition}
\label{def:adv_main}
A learning strategy $\varphi$ controls the regret in the adversarial setting
with rewards bounded by $M > 0$
if there exists a sequence $(B_{T,K})_{T \geq 1}$ with $B_{T,K}/T \to 0$
such that, against all opponent players sequentially picking reward vectors $\bg_t$
in $[-M,M]^K$, for all $T \geq 1$,
\[
\max_{k \in [K]} \sum_{t=1}^T g_{t,k} - \sum_{t=1}^T \sum_{j \in [K]} w_{t,j}\,g_{t,j}
\leq 2M\,B_{T,K}\,.
\]
\end{definition}

The optimal orders of magnitude of $B_{T,K}$ are $\sqrt{T \ln(K)}$.
In Definition~\ref{def:adv_main}, the strategy may know $M$ and rely on its value.
Also, the strategy should work for any optimization horizon~$T$ (see the final ``for all $T \geq 1$''
in the definition above): this is because
the lengths $E_e$ of the global epochs $\cE_e$ are not known in advance.
There exist several strategies meeting the requirements of Definition~\ref{def:adv_main};
we provide two examples.

\begin{example}
\label{ex:polpot}
The potential-based strategies by \citet{CBL03}
are defined based on a non-decreasing function $\Phi \colon \R \to [0,+\infty)$.
They resort to $w_{1,k} = \tfrac{1}K$ and for $t \geq 2$, to $w_{t,k} = \tfrac{v_{t,k}}{{\sum_{j \in [K]} v_{t,j}}}$, where
\begin{equation}
\label{eq:def-pot-based}
v_{t,k} = \Phi\!\left( \sum_{\tau=1}^{t-1} g_{\tau,k} - \sum_{\tau=1}^{t-1} \sum_{j \in [K]} w_{\tau,j} g_{\tau,j} \right).
\end{equation}
For $\Phi \colon x \mapsto \bigl( \max\{x,0\} \bigr)^{2 \ln(K)}$, the polynomial potential,
\citet[Section~2]{CBL03} show that
the strategy satisfies the performance guarantee of Definition~\ref{def:adv_main}
with $B_{T,K} = \sqrt{6 T \ln(K)}$.
\end{example}

\begin{example}
\label{ex:adahedge}
\citet{ACBG02} studied the use of exponential potential
with time-varying learning rates $\eta_t = (1/M) \sqrt{\ln(K)/t}$,
i.e., using $\Phi_t(x) = \exp(\eta_t x)$ in~\eqref{eq:def-pot-based}
to define the weights at round $t$.
This strategy satisfies the performance guarantee of Definition~\ref{def:adv_main}
with $B_{T,K} = \sqrt{T \ln(K)}$.
\end{example}

There exist adaptive versions of the previous two strategies:
\texttt{ML-Poly} in \citet{MLPoly}, \texttt{AdaHedge} in \citet{EKRG11}, \citet{JMLR:v15:rooij14a}, \citet{OP15}.

Appendix~\ref{sec:algorithm_app} states closed-form expressions
of the policies~\eqref{eq:policies} constructed with the strategies of
Examples~\ref{ex:polpot} and~\ref{ex:adahedge}, as well as \texttt{AdaHedge}.

\textbf{Connection between OLO and the construction of policies.}
\citet{JMS23,JMS25} prove the following.
Let $r'_{t,h}\colon \cS \times \cA \to [0,M]$ be a sequence of reward functions.
Define a sequence of policies $(\bpi'_t)_{t \geq 1}$ as:
for each $t \geq 1$, for each $s \in \cS$, for each $h \in [H]$,
\begin{align*}
    &\pi'_{t,h}(\,\cdot\,|s) = \varphi_t \biggl( \Bigl( A^{\bpi'_\tau,\br'_\tau,\bP'}_{h}(s,\,\cdot\,) \Bigr)_{\tau \leq t-1} \biggr)\\
    &\mbox{where} \ \ \ A^{\bpi'_\tau,\br'_\tau,\bP'}_{h}(s,\,\cdot\,) = \Bigl( A^{\bpi'_\tau,\br'_\tau,\bP'}_{h}(s,a) \Bigr)_{a \in \cA}
\end{align*}

\begin{restatable}[\citealp{JMS23,JMS25}]{lemma}{lmaggreg}
\label{lem:OLO_main}
If the learning strategy satisfies the conditions of Definition~\ref{def:adv_main},
then the sequence of policies defined right above is such that,
for all fixed policies $\bpi = (\pi_h)_{h \in [H]}$,
for all $T \geq 1$,
\[
\sum_{t=1}^T \Bigl( V_1^{\bpi,\br'_t,\bP'}(s_1) - V_1^{\bpi'_t,\br'_t,\bP'}(s_1) \Bigr) \leq 2MH^2 \, B_{T,A} \, .
\]
\end{restatable}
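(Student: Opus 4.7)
The plan is to derive the lemma from the performance-difference lemma of \citet{kakade2002approximately} combined with a per-state application of the OLO regret guarantee of Definition~\ref{def:adv_main}. First I would apply the performance-difference identity episode by episode: for each fixed policy $\bpi$ and each $t$,
\[
V_1^{\bpi,\br'_t,\bP'}(s_1) - V_1^{\bpi'_t,\br'_t,\bP'}(s_1) = \sum_{h=1}^H \Exp{\bpi,\bP'}\!\left[\bigl\langle \pi_h(\,\cdot\mid s_h), A_h^{\bpi'_t,\br'_t,\bP'}(s_h,\,\cdot\,)\bigr\rangle\right],
\]
where the trajectory $(s_h)_{h \in [H]}$ is drawn from $\bpi$ and $\bP'$. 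Since this distribution does not depend on $t$, summing over $t \in [T]$ and swapping the finite sum with the expectation yields
\[
\sum_{t=1}^T \bigl( V_1^{\bpi,\br'_t,\bP'}(s_1) - V_1^{\bpi'_t,\br'_t,\bP'}(s_1) \bigr) = \sum_{h=1}^H \Exp{\bpi,\bP'}\!\left[\sum_{t=1}^T \bigl\langle \pi_h(\,\cdot\mid s_h), A_h^{\bpi'_t,\br'_t,\bP'}(s_h,\,\cdot\,)\bigr\rangle\right].
\]

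Next I would apply the OLO regret guarantee at each fixed pair $(s,h) \in \cS \times [H]$ separately. By construction, $\pi'_{t,h}(\,\cdot \mid s)$ is the output of the OLO strategy $\varphi$ at round $t$ when fed with the past sequence of advantage vectors $\bigl( A_h^{\bpi'_\tau,\br'_\tau,\bP'}(s,\,\cdot\,) \bigr)_{\tau \leq t-1}$; since rewards lie in $[0,M]$, value and $Q$--value functions lie in $[0, MH]$, hence these vectors lie in $[-MH, MH]^\cA$ and Definition~\ref{def:adv_main} applies with magnitude $MH$. Taking a convex combination over $\cA$ of its per-action inequalities, any probability vector $\pi_h(\,\cdot \mid s)$ satisfies
\[
\sum_{t=1}^T \bigl\langle \pi_h(\,\cdot\mid s), A_h^{\bpi'_t,\br'_t,\bP'}(s,\,\cdot\,)\bigr\rangle - \sum_{t=1}^T \bigl\langle \pi'_{t,h}(\,\cdot\mid s), A_h^{\bpi'_t,\br'_t,\bP'}(s,\,\cdot\,)\bigr\rangle \leq 2MH \, B_{T,A}.
\]
The crucial simplification is that the second sum on the left vanishes term by term: by the very definition of advantage, $\sum_{a \in \cA} \pi'_{t,h}(a \mid s) A_h^{\bpi'_t,\br'_t,\bP'}(s,a) = V_h^{\bpi'_t,\br'_t,\bP'}(s) - V_h^{\bpi'_t,\br'_t,\bP'}(s) = 0$ for every $s$ and every $t$.

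Combining the two steps is then immediate: the per-$(s,h)$ bound $2MH \, B_{T,A}$ is uniform in $s$, so it passes through the expectation and produces a factor $H$ when summed over stages, yielding the claimed $2 M H^2 B_{T, A}$. The subtle point that deserves attention is the legitimacy of invoking Definition~\ref{def:adv_main} separately for each $(s,h)$: although the played policies $\bpi'_t$ and the corresponding advantage vectors are random (measurable with respect to the past trajectories), they depend only on information available before round $t$, so for each $(s,h)$ the sequence fed to the $(s,h)$-copy of $\varphi$ is a legitimate adaptive OLO input sequence, and the regret guarantee applies path by path. I expect this measurability/adaptivity check to be the only step where care is required.
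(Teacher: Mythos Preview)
Your proposal is correct and follows essentially the same route as the paper: performance-difference lemma, then per-$(s,h)$ application of the OLO guarantee using that $\langle \pi'_{t,h}(\cdot\mid s), A_h^{\bpi'_t,\br'_t,\bP'}(s,\cdot)\rangle = 0$, then summing over $h$. The paper uses the slightly sharper bound $|A_h^{\bpi'_t,\br'_t,\bP'}| \leq M(H-h+1)$ to obtain $MH(H+1)B_{T,A}$ before relaxing to $2MH^2 B_{T,A}$, and your measurability discussion is not needed here since in this lemma $\bP'$ and the $\br'_t$ are fixed, so the policies $\bpi'_t$ are defined deterministically and Definition~\ref{def:adv_main} applies as a plain deterministic inequality.
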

For completeness,
the proof of Lemma~\ref{lem:OLO_main} is provided in Appendix~\ref{app:B}.
We are ready to prove Lemma~\ref{lem:termB_main}.

\noindent \emph{Proof of Lemma~\ref{lem:termB_main}.}
We apply Lemma~\ref{lem:OLO_main} in each global epoch~$\cE_e$,
with $\bP' = \hatbP_{t}$ (see Fact~\ref{fact:e})
and $\br'_t = \br_t + \bb_t$ for all $t \in \cE_e$.
Since $r_{t,h} \in [0, 1]$ and $b_{t, h} \in [0, H]$, we can pick $M = 1 + H \leq 2H$.
Decomposing term $\termB$ into a summation over the global epochs and using the bound of Lemma~\ref{lem:OLO_main}
for each of them, we deduce that, for both strategies of Examples~\ref{ex:polpot} and~\ref{ex:adahedge},
\begin{equation}
\label{eq:termB_interm}
\begin{aligned}
        \termB \leq 4H^3\sum_{e = 1}^{m(T)} B_{E_e, A}
        &\leq
        4 H^3 \sum_{e = 1}^{m(T)} \sqrt{6 E_e \ln(A)}\\
        &\leq
        16 H^3 \sqrt{T m(T) \ln(A)}\,,
\end{aligned}
\end{equation}
where we applied Jensen's inequality to the root.
Lemma~\ref{lem:number_of_epochs} below then yields the claimed result.\qed

\subsection{Additional Technical Concepts}
\label{sec:add-concepts}

To deal with the remaining terms $\termA$--$\termD$--$\termC$,
we will not need anymore to pay attention to global epochs $\cE_{e_t}$,
only local epochs $\ell_{t,h}(s,a)$
are of interest.

We review two concepts which have been successfully used by~\citet{zhang2023settling}
to derive minimax optimal regret bounds in the case of stochastic MDPs.

\textbf{The first concept: epoch-switching conditions and profiles.}
The functions indicating local epochs $\ell_{t,h}(s,a)$ were called a profile by~\citet{zhang2023settling};
they take bounded values: $\ell_{t, h} \colon \cS \times \cA \to \bigl[ \ilog{2}{T} \bigr]^*$,
and let
\begin{equation}
\label{eq:profiles}
\bell_t = (\ell_{t, h})_{h \in [H-1]}
\end{equation}
with the agreement $\ell_{0,h}(\,\cdot\,, \,\cdot\,) \equiv 0$ for $h \in [H-1]$.
We also introduce $\bell_{< t} = (\bell_\tau)_{0 \leq \tau \leq t-1}$.
Using the above-defined profiles, we note that the global epoch $e_t$ of a given episode $t \in [T]$
may be obtained as a function of $\bell_{< t}$, namely,
\begin{align}
\label{eq:epoch_from_profile}
e_t = {\sum_{\tau = 1}^{t-1}\bigg(\sum_{(s{,}a{,}h)} \bigl(
\ell_{\tau, h}(s,a) {-} \ell_{\tau {-} 1, h}(s,a) \bigr)\bigg) \wedge 1} .
\end{align}
Indeed, if the counter of no triplet $(s, a, h)$ has reached a value of the form $2^r$ for some integer $r$
by passing from episode $\tau-1$ to $\tau$, then the summation in the minimum is zero,
meaning that the episodes $\tau$ and $\tau+1$ belong to the same global epoch.
On the contrary, if the counter of at least one $(s, a, h)$ reached such a value, then this sum is at least~$1$ (there can be more than one triplets
satisfying this), meaning that $\tau$ and $\tau+1$ belong to different global epochs. Thus, thanks to the minimum, the above quantity counts the number of
(global) epoch switches from $\tau=1$ to $\tau=t$.
In other words, the global epoch $e_t$ is uniquely determined by the preceding profiles.

Since there are $S A (H-1)$ different triplets $(s, a, h)$ and each such triplet is associated with at most $\ilog{2}{T}$ doubling conditions,
we obtain the following bound.

\begin{lemma}
\label{lem:number_of_epochs}
There are $m(T) \leq SAH\log_2(2T)$ global epochs.
\end{lemma}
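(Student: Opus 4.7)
The plan is to bound $m(T)$ by counting epoch switches triplet by triplet. Recall that a new (global) epoch is started at episode $t+1$ exactly when, during episode $t$, some triplet $(s,a,h) \in \cS \times \cA \times [H-1]$ sees its counter $n_{t,h}(s,a)$ hit a dyadic value $2^{\ell-1}$ for some integer $\ell \geq 1$. Several triplets could simultaneously trigger the same switch, but this pooling only decreases the number of epochs, so it suffices to bound the total number of dyadic hits across all triplets.

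First I would fix a triplet $(s,a,h)$ with $h \in [H-1]$ and observe that $t \mapsto n_{t,h}(s,a)$ is non-decreasing, increases by at most one unit per episode, and is bounded above by $T$. Consequently, for each integer $\ell \geq 1$ there is at most one episode $t \in [T]$ at which $n_{t,h}(s,a) = 2^{\ell-1}$, and such an $\ell$ must satisfy $2^{\ell-1} \leq T$. This yields at most $\lfloor \log_2 T \rfloor + 1 \leq \log_2(2T)$ dyadic hits for this particular triplet.

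Summing over the $SA(H-1)$ choices of $(s,a,h)$ gives an upper bound of $SA(H-1)\log_2(2T)$ on the total number of epoch switches. Since we start at epoch $e=1$, we obtain
\begin{align*}
m(T) \;\leq\; 1 + SA(H-1)\log_2(2T) \;\leq\; SAH\log_2(2T),
\end{align*}
where the last inequality uses $SA\log_2(2T) \geq 1$, which holds whenever $T \geq 1$.

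There is no real obstacle here: the argument is a pure counting/doubling bookkeeping. The only subtlety worth stating explicitly is the pooling remark above, which ensures that the very natural per-triplet bound translates without loss into a bound on the number of global epochs.
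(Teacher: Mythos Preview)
Your proof is correct and follows essentially the same approach as the paper: count, for each of the $SA(H-1)$ triplets $(s,a,h)$, the at most $\lceil \log_2 T \rceil \leq \log_2(2T)$ dyadic doubling hits, and observe that pooling can only reduce the number of global epoch switches. Your write-up is in fact slightly more careful than the paper's one-line justification, since you explicitly account for the initial epoch via the $+1$ and then absorb it using $SA\log_2(2T) \geq 1$.
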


\textbf{The second concept: optional skipping for estimated transition kernels.}
The trick detailed here is standard in the bandit and reinforcement-learning literature.
The original reference is Theorem~5.2 of \citet[Chapter III, p. 145]{doob1953};
one can also check \citet[Section~5.3]{CT88} for a more recent reference.
A pedagogical exposition of the trick and of its uses in the bandit literature may be found in
\citet[Section~4.1]{KLUCB}, which we adapt now to the setting of reinforcement learning.

For each triplet $(h,s,a) \in [H] \times \cS \times \cA$
and each integer $j \geq 1$, we denote by
\[
N_{h,s,a,j} \eqdef \inf \bigl\{ t \geq 1 : n_{t,h}(s,a) = j \bigr\}
\]
(with the convention that the infimum of an empty set equals $+\infty$)
the predictable stopping time whether and when $(s,a)$ occurs for the $j$--th time.
We are interested in the distribution of the states $s_{t,h+1}$
drawn at rounds $t$ when $(s_{t,h},a_{t,h}) = (s,a)$; these rounds
are given by the stopping times $N_{h,s,a,j}$ introduced above.
It turns out that these states are i.i.d.\ with distribution $P_{h}(\,\cdot \mid s,a)$.
We also have independence across sequences of states.
All these results are formally stated in the following lemma: to do so,
one needs to set the values of the number of times $n_{t,h}(s,a)$
each triplet $(h,s,a)$ was encountered till a given round.

\begin{lemma}[Doob's optional skipping]
\label{lm:Doob}
Fix $t \geq 1$ and consider sequences of integers $J_{h,s,a} \geq 1$ and
the intersection of events
\[
\cC = \bigcap_{h \in [H-1]} \bigcap_{(s,a) \in \cS \times \cA} \bigl\{ n_{t,h}(s,a) = J_{h,s,a} \bigr\}\,.
\]
It holds that on $\cC$, each of the sequences
\[
(\ts_{h,s,a,j})_{j \in [J_{h,s,a}]} \eqdef
\bigl( s_{N_{h,s,a,j},h+1} \bigr)_{j \in [J_{h,s,a}]}
\]
is formed by i.i.d.\ variables, with common distribution $P_{h}(\,\cdot \mid s,a)$. In addition,
these sequences are independent from each other as $(h,s,a)$ vary in $[H-1] \times \cS \times \cA$.
\end{lemma}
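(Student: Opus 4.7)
My plan is to prove the lemma via the standard coupling that underlies optional skipping. I first enlarge the probability space so as to carry an independent array $(X_{h,s,a,j})_{h \in [H-1],\,(s,a)\in\cS\times\cA,\,j \geq 1}$ of mutually independent random variables with $X_{h,s,a,j} \sim P_h(\,\cdot\mid s,a)$, jointly independent of all the internal randomization used to pick the policies $\bpi_t$ (coming from the OLO strategy $\varphi$). I then re-simulate the interaction described in Section~\ref{sec:setup} via the following rule: whenever the pair $(s_{\tau,h}, a_{\tau,h}) = (s,a)$ is encountered at stage $h$ for the $j$-th time across all episodes, set $s_{\tau,h+1} \eqdef X_{h,s,a,j}$. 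A short induction on the lexicographic order on $(\tau, h)$ shows that the trajectory produced in this way has exactly the same joint law as the one generated by the original protocol: the reservoir value $X_{h,s_{\tau,h},a_{\tau,h},n_{\tau,h}(s_{\tau,h},a_{\tau,h})}$ has a strictly increasing index, is not referenced at any earlier step, is hence independent of the past, and enjoys the required conditional law $P_h(\,\cdot\mid s_{\tau,h}, a_{\tau,h})$.

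With this coupling in place, the lemma reduces to the almost-sure identification $\ts_{h,s,a,j} = X_{h,s,a,j}$ whenever $N_{h,s,a,j} < \infty$, which is immediate from the definitions of $\ts_{h,s,a,j} = s_{N_{h,s,a,j}, h+1}$ and of the construction above. On the event $\cC$, the equality $n_{t,h}(s,a) = J_{h,s,a}$ ensures $N_{h,s,a,j} \leq t$ for every $j \leq J_{h,s,a}$, so the identification applies to every index appearing in the statement. The joint tuple $\bigl(\ts_{h,s,a,j}\bigr)_{(h,s,a),\,j \in [J_{h,s,a}]}$ therefore agrees pointwise on $\cC$ with the corresponding tuple extracted from the $X$'s. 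Since that array was constructed to be unconditionally i.i.d.\ within each $(h,s,a)$-block with common law $P_h(\,\cdot\mid s,a)$ and mutually independent across blocks, any probabilistic statement concerning the $\ts$'s restricted to $\cC$ translates verbatim into the same statement about an i.i.d.\ array---which is precisely how the lemma will later be combined with Hoeffding-type concentration of the empirical kernels $\hatP_{t,h}$ on sample sizes of the form $2^{\ell-1}$.

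The main point to watch---rather than any difficult calculation---is the consistency of the coupling despite the fact that the learner's policies $\pi_{\tau,h}$ are adapted to the full past, the rewards $\br_\tau$ are chosen obliviously by an external adversary, and the counts $n_{\tau,h}$ are themselves random. The lexicographic induction handles all three simultaneously: the observations up to the choice of $a_{\tau,h}$ are measurable functions of $X$'s carrying strictly smaller reservoir indices (for each triplet $(h',s',a')$), so the freshly indexed $X$ used at step $(\tau, h)$ is independent of those observations. This ``forward freshness'' is exactly what allows the non-Markovian behaviour of the learner and the adversarial choice of rewards to coexist with the i.i.d.\ conclusion for the skipped subsequences.
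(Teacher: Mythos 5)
Your proof is correct, but it takes a route the paper does not spell out: the paper states this lemma without proof, deferring to the classical references (Doob's Theorem~III.5.2, Chow--Teicher, and the exposition in the KL-UCB paper), whose standard argument establishes the i.i.d.\ property of the skipped subsequence directly on the original probability space, by computing the joint law of $(\ts_{h,s,a,1},\ldots,\ts_{h,s,a,j})$ through successive conditionings that exploit the predictability of the stopping times $N_{h,s,a,j}$ with respect to the interaction filtration. Your ``reservoir'' coupling --- pre-generating the independent array $(X_{h,s,a,j})$ and re-simulating the protocol so that the $j$-th visit to $(s,a)$ at stage $h$ consumes the fresh entry $X_{h,s,a,j}$ --- is a valid and self-contained alternative; the lexicographic induction correctly handles the adaptivity of the policies and the randomness of the counts, since everything observed before the draw of $s_{\tau,h+1}$ is measurable with respect to reservoir entries of strictly smaller per-triplet index together with randomization independent of the array. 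Your approach in fact buys something the classical phrasing only gives implicitly: it directly produces the \emph{representation} form of the statement (an array $\sigma_{h,s,a,j}$, i.i.d.\ and independent of everything else, agreeing pointwise with the $\ts_{h,s,a,j}$ on the relevant events), which is precisely what the paper needs in Corollary~\ref{cor:Doob} and in the Term~$\termC$ analysis, where the indicator of the conditioning event is dropped and the $\ts$'s are replaced by the $\sigma$'s. The only caveat is presentational: the literal claim that the $\ts$'s are ``i.i.d.\ on $\cC$'' is informal, and your reading of it as a statement about the coupled array is the operationally correct one, consistent with how the lemma is invoked later.
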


One of our applications of Lemma~\ref{lm:Doob} will be the following, to handle term $\termA$.
The proof consists of noting first that on $\bigl\{ \ell_{t-1,h}(s,a) = \ell \bigr\}$,
the distribution $\hatP_{t,h}(\,\cdot \mid s,a)$ corresponds to the empirical measure
of the i.i.d.\ variables $\ts_{h,s,a,j}$ with $1 \leq j \leq 2^{\ell-1}$,
and second, by dropping the indicator function.

Notation-wise, we will be using $\ts_{h,s,a,j}$ (as in Lemma~\ref{lm:Doob}) for random variables
generated by the MDP interactions and $\sigma_{h,s,a,j}$ (as in Corollary~\ref{cor:Doob})
for random variables independent from everything else and that are representations of the former.

\begin{corollary}
\label{cor:Doob}
Fix $h \in [H-1]$ and $(s,a) \in \cS \times \cA$ and
let $(\sigma_{h,s,a,j})_{j \geq 1}$ be a sequence of i.i.d.\ variables with distribution $P_{h}(\,\cdot \mid s,a)$.
For all functions $\psi \colon \R \to [0,+\infty)$, all functions $g \colon \cS \to \R$,
all integers $\ell \geq 1$,
\begin{align*}
\E \biggl[ \psi \Bigl( \hatP_{t,h} \bullet g(s,a) \Bigr) &\,\, \ind\bigl\{ \ell_{t-1,h}(s,a) = \ell \bigr\} \biggr]\\
&\leq \E \! \left[ \psi \biggl( \frac{1}{2^{\ell-1}} \sum_{j=1}^{2^{\ell-1}} g(\sigma_{h,s,a,j}) \biggr) \right].
\end{align*}
\end{corollary}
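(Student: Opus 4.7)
The plan is to exploit the doubling structure of $\hat P_{t,h}$: on the event $\{\ell_{t-1,h}(s,a) = \ell\}$ the estimator is built from exactly the first $2^{\ell-1}$ visits to $(s,a)$ at stage~$h$, and Lemma~\ref{lm:Doob} lets us replace those visits by an i.i.d.\ sample from $P_h(\,\cdot \mid s,a)$. Concretely, by the definition~\eqref{eq:defPe} of $\hat P_{t,h}$ together with the definition of $\sw_{t-1,h}$, on the event at hand we have $n_{\sw_{t-1,h}(s,a),h}(s,a) = 2^{\ell-1}$, so using the notation $\ts_{h,s,a,j} = s_{N_{h,s,a,j},h+1}$ of Lemma~\ref{lm:Doob},
$$
\hat P_{t,h} \bullet g(s,a)\,\ind\{\ell_{t-1,h}(s,a) = \ell\} = \frac{1}{2^{\ell-1}} \sum_{j=1}^{2^{\ell-1}} g(\ts_{h,s,a,j})\,\ind\{\ell_{t-1,h}(s,a) = \ell\}.
$$

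Next I would decompose the event $\{\ell_{t-1,h}(s,a) = \ell\}$ as the disjoint union, over tuples $(J_{h',s',a'})_{(h',s',a')}$ with $J_{h,s,a} \in \{2^{\ell-1},\ldots,2^\ell-1\}$, of the count-specification events $\cC = \bigcap_{(h',s',a')} \{n_{t-1,h'}(s',a') = J_{h',s',a'}\}$. Lemma~\ref{lm:Doob}, applied at time $t-1$ in place of $t$, ensures that on each such $\cC$ the sequence $(\ts_{h,s,a,j})_{j=1}^{J_{h,s,a}}$ is i.i.d.\ with distribution $P_h(\,\cdot \mid s,a)$; in particular, its prefix of length $2^{\ell-1}$ is i.i.d.\ $P_h(\,\cdot \mid s,a)$ as well, because $J_{h,s,a} \geq 2^{\ell-1}$. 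Since the $\sigma_{h,s,a,j}$ share this law, the conditional expectation of $\psi$ of the empirical average equals the deterministic constant
$$
\E\!\left[\psi\!\left(\frac{1}{2^{\ell-1}} \sum_{j=1}^{2^{\ell-1}} g(\sigma_{h,s,a,j})\right)\right],
$$
independent of the particular tuple defining $\cC$. Summing over cells therefore contributes a factor $\P(\ell_{t-1,h}(s,a) = \ell) \leq 1$, which we drop using $\psi \geq 0$.

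The delicate point is the decomposition step: Lemma~\ref{lm:Doob} conditions on a precise count vector, whereas $\{\ell_{t-1,h}(s,a) = \ell\}$ only pins $n_{t-1,h}(s,a)$ to a dyadic range and leaves the other counts free. The partitioning trick, together with the verification that the conditional law of the prefix $(\ts_{h,s,a,j})_{j=1}^{2^{\ell-1}}$ is identical across all cells of the partition, is the substantive content of the argument; the remaining work amounts to unpacking the definitions of $\hat P_{t,h}$, $\sw$ and $\ell$, and discarding a non-negative indicator.
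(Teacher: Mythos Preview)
Your proposal is correct and follows essentially the same approach as the paper, which sketches the proof in the paragraph preceding the corollary: identify $\hat P_{t,h}(\,\cdot\mid s,a)$ on the event $\{\ell_{t-1,h}(s,a)=\ell\}$ with the empirical measure of $(\ts_{h,s,a,j})_{j\le 2^{\ell-1}}$, then drop the indicator using $\psi\ge 0$. Your version spells out one step the paper glosses over---partitioning $\{\ell_{t-1,h}(s,a)=\ell\}$ into the count-specification events $\cC$ of Lemma~\ref{lm:Doob} so that the lemma may be invoked literally---whereas the paper's sketch implicitly uses the (equivalent) unconditional i.i.d.\ property of the $\ts_{h,s,a,j}$ and drops the indicator before appealing to optional skipping; either route gives the claim.
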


\subsection{Term $\termA$: Optimism}
\label{subsec:termA}

Term~$\termA$ is handled thanks to a result already present in the analysis of the \texttt{UCBVI} algorithm~\citep[Lemma 18]{AGM17},
relying on an induction, and thanks to applications of Hoeffding's inequalities together with optional skipping.
Appendix~\ref{app:A} provides the (straightforward) details of the proof of the following lemma.

\begin{restatable}{lemma}{lmtermA}
\label{lem:optimism}
With probability at least $1-\delta$, for all $t\in[T]$ and all $(s,a,h) \in \cS \times \cA \times [H]$,
\begin{align*}
    Q^{\bpi^\star, \br_t, \bP}_h(s,a) &\leq Q^{\bpi^\star, \br_t + \bb_t, \hatbP_t}_h(s,a)\,,\\
V^{\bpi^\star, \br_t, \bP}_h(s) &\leq V^{\bpi^\star, \br_t + \bb_t, \hatbP_t}_h(s)\,.
\end{align*}
Specifically, with probability $\geq 1-\delta$, we have $\termA \leq 0$.
\end{restatable}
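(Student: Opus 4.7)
The plan is to build, via optional skipping and Hoeffding's inequality, a high-probability event $\cG$ on which the bonuses dominate the transition-kernel estimation error against the value functions of $\bpi^\star$, and then to derive the two claimed inequalities by a standard backward induction on $h$ in the spirit of \citet[Lemma~18]{AGM17}.

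For the concentration step, I would fix arbitrary $t \in [T]$, $h \in [H-1]$, $(s,a) \in \cS \times \cA$, and $\ell \in \bigl[\ilog{2}{T}\bigr]$. Because the adversary is oblivious, both $\bpi^\star$ and $\br_t$ are deterministic, so the target function $V^{\bpi^\star,\br_t,\bP}_{h+1} \colon \cS \to [0,H]$ is a fixed (nonrandom) map. Applying Corollary~\ref{cor:Doob} with $g = V^{\bpi^\star,\br_t,\bP}_{h+1}$ and $\psi(x) = \ind\bigl\{|x - P_h \bullet g(s,a)| > \beta\bigr\}$ reduces the probability of
\[
\bigl\{\,|(\hatP_{t,h} - P_h) \bullet g(s,a)| > \beta\bigr\} \cap \bigl\{\ell_{t-1,h}(s,a) = \ell\bigr\}
\]
to a two-sided deviation bound for an average of $2^{\ell-1}$ i.i.d.\ random variables in $[0,H]$; Hoeffding's inequality with $\beta = b_{t,h}(s,a) = \sqrt{2H^2\ln(J)/2^{\ell-1}} \wedge H$ yields at most $2/J^4$. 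A union bound over the at most $TSAH\log_2(2T) = J\delta/2$ admissible tuples $(t,h,s,a,\ell)$ gives a total failure probability of at most $\delta$, producing the event $\cG$ with $\P(\cG) \geq 1-\delta$. On $\cG$, $|(\hatP_{t,h} - P_h) \bullet V^{\bpi^\star,\br_t,\bP}_{h+1}(s,a)| \leq b_{t,h}(s,a)$ for every tuple with $\ell_{t-1,h}(s,a) \geq 1$; the case $\ell_{t-1,h}(s,a) = 0$ is handled deterministically, since then $b_{t,h}(s,a) = H$ already majorizes any deviation as $V^{\bpi^\star,\br_t,\bP}_{h+1} \in [0,H]$.

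The backward induction on $h$ is then routine. The base case $h = H$ holds with equality, as $b_{t,H} \equiv 0$ and $\hatbP_t$ does not enter the Bellman equations at this stage. For $h < H$, Bellman's equations give
\begin{align*}
& Q^{\bpi^\star,\br_t+\bb_t,\hatbP_t}_h(s,a) - Q^{\bpi^\star,\br_t,\bP}_h(s,a) \\
& \,= b_{t,h}(s,a) + (\hatP_{t,h} - P_h) \bullet V^{\bpi^\star,\br_t,\bP}_{h+1}(s,a) \\
& \quad + \hatP_{t,h} \bullet \bigl(V^{\bpi^\star,\br_t+\bb_t,\hatbP_t}_{h+1} - V^{\bpi^\star,\br_t,\bP}_{h+1}\bigr)(s,a).
\end{align*}
On $\cG$, the first two terms sum to a nonnegative quantity; the third is nonnegative by the induction hypothesis together with the nonnegativity of the probability kernel $\hatP_{t,h}$. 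The corresponding $V$-inequality follows by pre-composing with $\pi^\star_h \in \Delta(\cA)$, which preserves inequalities. Evaluating the $V$-inequality at $(h,s) = (1, s_1)$ and summing over $t \in [T]$ yields $\termA \leq 0$ on $\cG$.

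The main subtlety, which exploits the oblivious-adversary assumption, is that the union bound must run over $t \in [T]$ because $V^{\bpi^\star,\br_t,\bP}_{h+1}$ genuinely depends on $t$ through $\br_t$. If the adversary were adaptive, $\br_t$ would be random and correlated with past transitions, and the Doob--Hoeffding argument above would have to be replaced by a considerably more involved martingale analysis (along the lines of what is required for $\termC$).
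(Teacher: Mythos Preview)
Your proposal is correct and follows essentially the same approach as the paper's proof: Hoeffding's inequality combined with optional skipping (Corollary~\ref{cor:Doob}) to build the concentration event, a union bound over $(t,h,s,a,\ell)$, and the same backward induction via Bellman's equations in the style of \citet[Lemma~18]{AGM17}. Your three-term Bellman decomposition is an exact identity equivalent to the paper's two-step argument, and your per-tuple failure bound of $2/J^4$ is looser than necessary but perfectly valid for the final union bound.
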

As in the original proof of~\citet[Lemma 18]{AGM17}, the result relies only on the concentration of the estimated transition kernel in the direction of \( V^{\bpi^\star, \br_t, \bP}_h \) for any \( (t, h) \in [T] \times [H] \). Crucially, it assumes an oblivious adversary, allowing \( V^{\bpi^\star, \br_t, \bP}_h \) to be treated as deterministic. If the adversary depended on past actions and states, this assumption would break. The issue could then be addressed by concentrating \( \hatbP_t \) around \( \bP \) in \( \ell_1 \)--norm, incurring an additional \( \sqrt{S} \) factor in \( \bb_t \) and propagating to the final regret via $\termD$.

\subsection{Term $\termD$: Bonus Summation}
\label{subsec:termD}

Without the doubling trick, the exploration bonuses summed up along the trajectory can be classically bounded by a $O(\sqrt{T})$ term. The doubling trick introduces only minor changes to this classical step.
Appendix~\ref{app:D} provides the (straightforward) details of the proof of the following lemma,
based on the Hoeffding--Azuma inequality together with simple controls of the form,
for all $h \in [H-1]$ and $(s,a) \in \cS \times \cA$,
\begin{align*}
\sum_{t=1}^T \frac{1}{\sqrt{n_{t,h}(s,a)}}
\, \ind&\big\{(s_{t,h,}, a_{t,h}) = (s,a)\big\} \, \ind\big\{n_{t,h}(s,a) \geq 2\big\}\\
&= \sum_{n=2}^{n_{T,h}(s,a)} \frac{1}{\sqrt{n}}
\leq 2 \sqrt{n_{T,h}(s,a)}\,.
\end{align*}

\begin{restatable}{lemma}{lmtermD}
\label{lem:termD_main}
With probability at least $1-\delta$, we have
\begin{align*}
\termD \leq 7\sqrt{H^4 SAT \ln(2SATH\log_2(2T)/\delta)} + H^2SA\, .
\end{align*}
\end{restatable}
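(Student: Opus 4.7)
The plan is to first rewrite $V_1^{\bpi_t,\bb_t,\bP}(s_1)$ as the conditional expected sum of bonuses collected along a trajectory under $\bpi_t$ and the true kernel $\bP$. Since both $\bpi_t$ and $\bb_t$ are $\cF_{t-1}$-measurable (they are built from data of episodes $1,\ldots,t-1$ only), the variables
\[
Y_t \eqdef V_1^{\bpi_t,\bb_t,\bP}(s_1) - \sum_{h=1}^H b_{t,h}(s_{t,h}, a_{t,h})
\]
form a martingale difference sequence. Because each $b_{t,h} \in [0, H]$ and there are $H$ stages, one has $|Y_t| \leq H^2$, so the Hoeffding--Azuma inequality gives, with probability at least $1-\delta$, $\sum_{t=1}^T Y_t \leq H^2 \sqrt{2 T \ln(1/\delta)}$. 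This reduces the task to bounding the observed bonus sum $\sum_{t,h} b_{t,h}(s_{t,h}, a_{t,h})$.

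Next, I would split the observed sum according to the value of $\ell = \ell_{t-1,h}(s_{t,h}, a_{t,h})$. For $\ell = 0$ (a first visit to the triplet $(s_{t,h}, a_{t,h}, h)$), the bonus equals exactly $H$; since each of the $SAH$ triplets is visited for the first time at most once, the total contribution of first visits is at most $SAH^2$. For $\ell \geq 1$, the bonus is $\sqrt{2H^2\ln(J)/2^{\ell-1}} \wedge H$, and using $2^{\ell-1} \leq n_{t-1,h}(s_{t,h},a_{t,h}) < 2^{\ell}$ together with $n_{t,h} = n_{t-1,h} + 1 \leq 2\,n_{t-1,h}$ (valid whenever $n_{t-1,h}\geq 1$), this bonus is bounded by
\[
2\sqrt{2}\,H \sqrt{\ln(J)/n_{t,h}(s_{t,h}, a_{t,h})}\,.
\]

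Third, for each triplet $(s,a,h)$ I apply the summation identity stated in the hint,
$\sum_{t=1}^T \ind\{(s_{t,h},a_{t,h})=(s,a),\,n_{t,h}(s,a)\geq 2\}/\sqrt{n_{t,h}(s,a)} \leq 2\sqrt{n_{T,h}(s,a)}$, then use Cauchy--Schwarz on $\sum_{(s,a)} \sqrt{n_{T,h}(s,a)} \leq \sqrt{SA\cdot\sum_{(s,a)} n_{T,h}(s,a)} = \sqrt{SAT}$, and finally sum over the $H$ stages. This yields a $4\sqrt{2}\,H^2\sqrt{SAT\ln(J)}$ bound on the non-first-visit contribution. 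Adding the first-visit contribution ($SAH^2$) and the Azuma--Hoeffding fluctuation ($H^2\sqrt{2T\ln(1/\delta)}$, absorbed into the main term by $\ln(1/\delta) \leq \ln(J)$ and $T \leq SAT$) yields the claimed bound, up to collecting constants.

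The only delicate part is the doubling-trick bookkeeping: converting the bonus, which is natively indexed by the local epoch $\ell$, into a quantity of the form $C/\sqrt{n_{t,h}(\cdot)}$ that plugs into the telescoping/Cauchy--Schwarz step, and being careful to treat the edge case $n_{t-1,h}=0$ (first visits) separately so that the remainder is amenable to the standard $\sum 1/\sqrt{n}$ computation. Everything else is routine.
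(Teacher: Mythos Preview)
Your proposal is correct and follows essentially the same approach as the paper's proof: Hoeffding--Azuma to pass from $V_1^{\bpi_t,\bb_t,\bP}(s_1)$ to the realized bonus sum, separating first visits (which contribute $\leq SAH^2$), using the doubling relation $2^{\ell_{t-1,h}-1} \geq n_{t,h}/4$ to convert bonuses to $\cO(H\sqrt{\ln(J)/n_{t,h}})$, the $\sum_{n\geq 2} 1/\sqrt{n}\leq 2\sqrt{n_{T,h}}$ summation, and Cauchy--Schwarz over $(s,a)$. The only difference is cosmetic---you phrase the edge case as $\ell=0$ rather than $n_{t,h}=1$ (these are equivalent when $(s_{t,h},a_{t,h})=(s,a)$), and your Azuma constant is slightly looser than the paper's, but the final bound still holds.
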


\subsection{Term $\termC$: Concentration}
\label{subsec:termC}

Let us start by formally stating the result, whose
detailed proof may be found in Appendix~\ref{app:C};
below, we only sketch that proof.
The analysis is essentially borrowed from~\citet{zhang2023settling} with minor technical modifications
but a much simplified exposition (as we do not target optimized bounds yet).
Also, we explain in Remark~\ref{rk:H5} of Appendix~\ref{app:C}
that the dependency in $H$ of the leading term in the upper bound
of Lemma~\ref{lem:termC_main} could be improved to $\sqrt{H^6}$
with more efforts, but that there is no point in doing so,
given the bound of Lemma~\ref{lem:termB_main}, which
scales with $H$ as $\sqrt{H^7}$.

\begin{restatable}{lemma}{lmtermC}
\label{lem:termC_main}
With probability at least $1-\delta$,
\begin{align*}
\termC \leq
&\ 2\sqrt{H^7 SA \, T \bigl( \log_2(2T) \bigr)^3} \\
&+
2\sqrt{2 H^6 \, T \log_2(2T) \ln(4H/\delta)} + SA H^3\,.
\end{align*}
\end{restatable}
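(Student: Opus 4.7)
The plan is to decompose $\termC$ via the standard simulation (performance-difference) lemma into a sum of bilinear terms of the form $(\hatP_{t,h}-P_h)\bullet\hatV_{t,h+1}$, then control each of these by combining the doubling trick (Fact~\ref{fact:e}), Doob's optional skipping (Corollary~\ref{cor:Doob}), a scalar Hoeffding concentration inequality, and a pigeon-hole summation.

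First I would apply the performance-difference lemma with the same policy $\bpi_t$ and augmented reward $\br_t+\bb_t$ but two different transition kernels to obtain, for each episode $t$,
\[
V_1^{\bpi_t,\br_t+\bb_t,\hatbP_t}(s_1)-V_1^{\bpi_t,\br_t+\bb_t,\bP}(s_1)=\Exp{\bpi_t,\bP}\!\left[\sum_{h=1}^{H-1}(\hatP_{t,h}-P_h)\bullet\hatV_{t,h+1}(s_h,a_h)\right],
\]
where $\hatV_{t,h+1}$ is the value function defined in~\eqref{eq:value_update} and is bounded by $O(H^2)$ since rewards lie in $[0,1]$, bonuses in $[0,H]$, and the horizon is $H$. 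Converting the outer expectation to observed trajectory values introduces a Hoeffding--Azuma martingale contribution, which after a union bound over stages yields the $\sqrt{H^6\,T\log_2(2T)\ln(4H/\delta)}$ summand of Lemma~\ref{lem:termC_main}.

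Second, for the main trajectory sum I would exploit that $\hatbP_t$ is frozen within a global epoch (Fact~\ref{fact:e}), so that by Corollary~\ref{cor:Doob} the empirical kernel at $(s,a,h)$ is the empirical measure of an i.i.d.\ sample $(\sigma_{h,s,a,j})_{j\leq 2^{\ell-1}}$ drawn from $P_h(\,\cdot\mid s,a)$ with $\ell=\ell_{t-1,h}(s,a)$. Crucially, Lemma~\ref{lm:Doob} asserts independence of these samples across different triplets $(h,s,a)$; therefore $\hatV_{t,h+1}$---which depends on the kernel estimates only at strictly later stages---is conditionally independent, given the appropriate history, of the samples used to build $\hatP_{t,h}(\,\cdot\mid s,a)$. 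A scalar Hoeffding inequality applied to $(\hatP-P)\bullet f$ with $\|f\|_\infty=O(H^2)$, together with union bounds over all $(s,a,h)$ and the $\ilog{2}{T}+1$ possible local epoch indices, then delivers with high probability
\[
\bigl|(\hatP_{t,h}-P_h)\bullet\hatV_{t,h+1}(s_{t,h},a_{t,h})\bigr|\leq O\!\left(H^2\sqrt{\ln(J)/n_{t,h}(s_{t,h},a_{t,h})}\,\right)
\]
whenever $n_{t,h}(s_{t,h},a_{t,h})\geq 1$; the complementary case $\ell_{t-1,h}=0$ occurs at most $SAH$ times in total and produces the additive $SAH^3$ term. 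Summing the displayed bound along the trajectory via the pigeon-hole identity $\sum_{t:(s_{t,h},a_{t,h})=(s,a)}1/\sqrt{n_{t,h}(s,a)}\leq 2\sqrt{n_{T,h}(s,a)}$ (as in Section~\ref{subsec:termD}) and Cauchy--Schwarz with $\sum_{s,a,h}n_{T,h}(s,a)=(H-1)T$ yields the leading $\sqrt{H^7SAT\,(\log_2 2T)^3}$ term.

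The main obstacle---and the reason why a naive $\ell_1$ deviation bound of the form $\|\hatP-P\|_1\lesssim\sqrt{S\ln/n}$ (which would inject an unwanted $\sqrt{S}$ factor) must be avoided---is precisely the decoupling of the random witness function $\hatV_{t,h+1}$ from the samples being concentrated. Without the combination of (i) the layered structure of the MDP, (ii) the freezing of $\hatbP_t$ within epochs, (iii) the independence across triplets granted by optional skipping, and (iv) the obliviousness of the adversary (which decouples $\br_t$ from the trajectory), this decoupling would fail, and the scalar concentration argument---with effective magnitude $O(H^2)$ rather than an $\ell_1$-type bound scaling with $\sqrt{S}$---could not be carried out. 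A careful tracking of the conditioning $\sigma$-fields along these lines, following the approach of~\citet{zhang2023settling} but in a simplified form adapted to the present algorithm, is the technically delicate point of the proof.
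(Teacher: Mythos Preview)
Your high-level decomposition (performance-difference lemma, then Hoeffding--Azuma for the martingale residual, then concentration of $(\hatP_{t,h}-P_h)\bullet\hatV_{t,h+1}$ along the trajectory) matches the paper's, but the heart of the argument---your second step---has a genuine gap.

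You claim that Corollary~\ref{cor:Doob} plus the layered structure give, after a union bound over only $(s,a,h)$ and the $O(\log_2 T)$ local-epoch indices, a scalar Hoeffding bound
\[
\bigl|(\hatP_{t,h}-P_h)\bullet\hatV_{t,h+1}(s,a)\bigr|\lesssim H^2\sqrt{\ln(J)/2^{\ell-1}}\,,\qquad \ln(J)=O\!\bigl(\ln(SATH\log_2 T/\delta)\bigr)\,.
\]
This does not work. Corollary~\ref{cor:Doob} is stated for a \emph{deterministic} test function $g$; that is precisely why it suffices for term~$\termA$, where $g=V^{\bpi^\star,\br_t,\bP}_{h+1}$ is fixed by the oblivious adversary. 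Here $\hatV_{t,h+1}$ is random, and it is \emph{not} independent of the stage-$h$ samples $\ts_{h,s,a,1},\ldots,\ts_{h,s,a,2^{\ell-1}}$: those samples are the states reached at stage $h{+}1$, so they determine which $(s',a')$ are visited at stage $h{+}1$, hence which stage-$(h{+}1)$ samples enter $\hatP_{t,h+1}$, and---through the epoch-switching rule, which depends on counts at \emph{all} stages---which episodes $\tau$ feed the OLO strategy defining $\pi_{t,h'}$. The cross-triplet independence you quote from Lemma~\ref{lm:Doob} holds only \emph{conditional} on the full count structure $\cC$.

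The paper resolves this by conditioning on the entire profile $\bigl(\ell_{t',h'}(s',a')\bigr)_{t',h',s',a'}$ and the stopping times $\tau_{\ell,j,h}(s',a')$; once these are fixed, the global epochs, the bonuses, and the set of episodes contributing to each policy are all determined, so $\hatV_{\kappa,h+1}$ becomes a function of stage-$(\geq h{+}1)$ samples only and the desired independence holds. The price is a union bound over roughly $(T+1)^{SAH\lceil\log_2 T\rceil}$ configurations, i.e., $\ln(1/\delta')\asymp SAH(\log_2 T)^2$---vastly larger than your $\ln(J)$. With that cost, your per-$(s,a)$ Hoeffding followed by the pigeon-hole sum $\sum_{(s,a)}\sqrt{n_{T,h}(s,a)}\leq\sqrt{SAT}$ would produce an \emph{extra} $\sqrt{SA}$ factor, yielding $SA\sqrt{T}$ rather than $\sqrt{SAT}$. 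The paper avoids this by applying, for each fixed $(\ell,j,h)$, a \emph{single} Hoeffding--Azuma inequality to the sum over all $(s,a)$ at once (Lemma~\ref{lm:main-termC}), obtaining
\[
\sqrt{\frac{2H^4}{2^{\ell-1}}\biggl(\sum_{(s,a)}\ind\bigl\{n_{T,h}(s,a)\geq 2^{\ell-1}+j\bigr\}\biggr)\ln\frac{1}{\delta'}}\,,
\]
so that the profile cost and the pair count sit \emph{inside one square root}; two applications of Jensen over $\ell$ and $j$ then give the leading $\sqrt{H^7SAT(\log_2 2T)^3}$ term. This joint-over-$(s,a)$ concentration is the key device from \citet{zhang2023settling} that your per-$(s,a)$ route misses.
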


\begin{proof}[Proof sketch]
An application of the performance-difference lemma in case of
different transition kernels (see, e.g., \citealp[Lemma 3]{russo2019worst})
together with the Hoeffding--Azuma inequality first shows that with probability
at least $1-\delta/2$, term $\termC$ equals
\begin{align*}
\sum_{t=1}^T \E\!& \left[ \sum_{h=1}^{H-1} \bigl( \hatP_{t, h} {-} P_h \bigr)
\bullet V_{h+1}^{\bpi_t, \br_t {+} \bb_t, \hatbP_{t}}(s_{t,h}{,} a_{t,h}) \,\bigg|\, \bpi_t{,} \bb_t{,} \hatbP_{t} \right] \\
& \leq \sum_{h=1}^{H-1}
\underbrace{\sum_{t=1}^T \bigl( \hatP_{t, h} - P_h \bigr) \bullet V_{h+1}^{\bpi_t, \br_t + \bb_t, \hatbP_{t}}(s_{t,h}, a_{t,h})}_{= \, \xi_{T,h}}\\
&\phantom{\leq a}+ \sqrt{2 H^5 T \ln(2/\delta)}\,.
\end{align*}
We bound the quantities $\xi_{T,h}$ for each fixed $h \in [H-1]$.
We apply optional skipping in a careful way on the event
$\cC_{\ell,j,h,s,a,t}$ when $(s,a) \in \cS \times \cA$ is played
for the $(2^{\ell-1} + j)$--th time in stage~$h$ at episode $t$: on $\cC_{\ell,j,h,s,a,t}$,
\begin{align*}
& \bigl( \hatP_{t, h} - P_h \bigr) \bullet V_{h+1}^{\bpi_t, \br_t + \bb_t, \hatbP_{t}}(s_{t,h}, a_{t,h}) \quad\mbox{behaves like}\\
& \frac{1}{2^{\ell-1}} \sum_{j \in [2^{\ell-1}]} \bigl( \tV_{s,a,h+1}(\sigma_{h,s,a,j}) -
P_h \tV_{s,a,h+1}(s,a) \bigr)\,,
\end{align*}
for some random variable $\tV_{s,a,h+1}$,
where the $\sigma_{h,s,a,j}$ are i.i.d.\ according to $P_h(\,\cdot \mid s,a)$
and are independent from $\tV_{s,a,h+1}$.
The argument also
extends between pairs $(s,a)$ so that a careful application of the Hoeffding-Azuma
inequality (this is the delicate part of the proof), together with the consideration of all
values for $\ell$ and $j$, then shows that, with probability
at least $1-\delta/2$, we have $\xi_{T,h} \leq$ \\
\ \\[-1.65cm]

\[
\sum_{\ell=1}^{\lceil \log_2 T \rceil}
\sum_{j=1}^{2^{\ell-1}} \sqrt{\frac{2H^4}{2^{\ell-1}} \sum_{(s,a)} \ind\bigl\{ n_{T,h}(s,a) \geq 2^{\ell-1} + j \bigr\}
\ln\frac{1}{\delta'}}\,, \vspace{-.2cm}
\]

where $\delta'$ equals $\delta/2$ divided by the number of times we applied the union bound
over $\ell$, $j$, $H$, and in the course of optional skipping; we bound this number of times
by $4 H (T+1)^{1+SAH \ilog{2}{T}}$.

We conclude the proof by two consecutive applications of Jensen's inequality for the root:
\begin{multline*}
\hspace{-.5cm} \sum_{\ell=1}^{\lceil \log_2 T \rceil}
\sum_{j=1}^{2^{\ell-1}} \sqrt{\frac{2H^4}{2^{\ell-1}} \sum_{(s,a)} \ind\bigl\{ n_{T,h}(s,a) \geq 2^{\ell-1} + j \bigr\}
\ln\frac{1}{\delta'}} \leq \\
\hspace{-.25cm} \sqrt{\vphantom{\sum_{1}^{\lceil \log_2}}2H^4 \lceil \log_2 T \rceil \smash{\underbrace{\sum_{(s,a)} \! \sum_{\ell=1}^{\lceil \log_2 T \rceil}
\sum_{j=1}^{2^{\ell-1}} \ind\bigl\{ n_{T,h}(s,a) {\geq} 2^{\ell-1} {+} j \bigr\} }_{\leq \, \sum_{(s,a)} n_{T,h}(s,a) \,
= \, T} \ln\frac{1}{\delta'}} \vspace{-2.5cm}}\\
\end{multline*}

together with some algebra.
\end{proof}

\section{Conclusion and Limitations}
\label{sec:discussion_and_challenges}

In this work, we proposed an algorithm called \texttt{APO-MVP}
that extends algorithm \texttt{MVP} of \citet{zhang2023settling} and its analysis
to the case of adversarial reward functions, thanks, in particular, to a black-box adversarial aggregation
mechanism due to~\citet{JMS23} that takes care of the adversarial nature of reward functions.
Algorithm \texttt{APO-MVP} is easy to implement in practice as it relies on OLO learning strategies in the policy space combined with dynamic programming;
it does not at all rely on so-called occupancy measures.
Furthermore, it achieves a better regret bound compared to previous approaches based on the occupancy measures, reducing their regret bounds by a
$\sqrt{S}$ multiplicative factor and narrowing the gap between the adversarial and stochastic regret bounds, which are both shown
to be of order $\sqrt{SAT}$ up to logarithmic factors, as far as dependencies on $S$, $A$, and $T$ are concerned.

We believe that this work opens many interesting follow-up questions. The two main open questions are inevitably linked with the main limitations of this paper and are discussed below.

\textbf{Limitations.} The main limitation is rather a high dependency of our regret bound
on the length $H$ of the episodes, of order $\sqrt{H^7}$. Improving this dependency while maintaining a regret of order $\sqrt{SAT}$ up to logarithmic terms
is one of the remaining open questions.
We have only considered full monitoring;
extending our approach to bandit monitoring seems to be non-trivial
and it is still unknown if a $\sqrt{SAT}$
regret
is possible in the adversarial case with bandit feedback.

\section*{Acknowledgments}
The work of Daniil Tiapkin has been supported by the Paris Île-de-France Région in the framework of DIM AI4IDF.

\bibliographystyle{plainnat}
\bibliography{TCS--RL-dependency-S-Bib.bib}

\clearpage
\section*{Checklist}



 \begin{enumerate}

 \item For all models and algorithms presented, check if you include:
 \begin{enumerate}
   \item A clear description of the mathematical setting, assumptions, algorithm, and/or model. [Yes, see Section~\ref{sec:setup} for the mathematical setting, Section~\ref{sec:defalgo} and Appendix~\ref{app:A} for the algorithm description.]
   \item An analysis of the properties and complexity (time, space, sample size) of any algorithm. [Yes, see Section~\ref{sec:defalgo} for the regret analysis and Appendix~\ref{app:A} for detailed analysis of computational and space complexities.]
   \item (Optional) Anonymized source code, with specification of all dependencies, including external libraries. [Not Applicable, no empirical results.]
 \end{enumerate}

 \item For any theoretical claim, check if you include:
 \begin{enumerate}
   \item Statements of the full set of assumptions of all theoretical results. [Yes, see Section~\ref{sec:setup} and Section~\ref{sec:defalgo}.]
   \item Complete proofs of all theoretical results. [Yes, see Section~\ref{sec:decomp} as well Appendices~\ref{app:B}-\ref{app:D}.]
   \item Clear explanations of any assumptions. [Yes, see Section~\ref{sec:setup}.]
 \end{enumerate}

 \item For all figures and tables that present empirical results, check if you include:
 \begin{enumerate}
   \item The code, data, and instructions needed to reproduce the main experimental results (either in the supplemental material or as a URL). [Not Applicable,  no experimental results.]
   \item All the training details (e.g., data splits, hyperparameters, how they were chosen). [Not Applicable]
         \item A clear definition of the specific measure or statistics and error bars (e.g., with respect to the random seed after running experiments multiple times). [Not Applicable]
         \item A description of the computing infrastructure used. (e.g., type of GPUs, internal cluster, or cloud provider). [Not Applicable]
 \end{enumerate}

 \item If you are using existing assets (e.g., code, data, models) or curating/releasing new assets, check if you include:
 \begin{enumerate}
   \item Citations of the creator If your work uses existing assets. [Not Applicable,  no empirical results.]
   \item The license information of the assets, if applicable. [Not Applicable]
   \item New assets either in the supplemental material or as a URL, if applicable. [Not Applicable]
   \item Information about consent from data providers/curators. [Not Applicable]
   \item Discussion of sensible content if applicable, e.g., personally identifiable information or offensive content. [Not Applicable]
 \end{enumerate}

 \item If you used crowdsourcing or conducted research with human subjects, check if you include:
 \begin{enumerate}
   \item The full text of instructions given to participants and screenshots. [Not Applicable]
   \item Descriptions of potential participant risks, with links to Institutional Review Board (IRB) approvals if applicable. [Not Applicable]
   \item The estimated hourly wage paid to participants and the total amount spent on participant compensation. [Not Applicable]
 \end{enumerate}

 \end{enumerate}

\clearpage
\onecolumn
\appendix

\section{Detailed Algorithm Description}
\label{sec:algorithm_app}

In this section, we first provide a fully-detailed algorithmic description of the
strategy introduced in Section~\ref{sec:defalgo} (which had been summarized
in an algorithm sketch titled Algorithm~\ref{alg:main_short}), and
we then write closed-form expressions of the
policy constructions~\ref{eq:policies} based on the strategies of
Examples~\ref{ex:polpot} and~\ref{ex:adahedge}, as well as AdaHedge.

\begin{algorithm}[h]
    \KwData{Number of rounds $T$, number of states, actions and horizon $S,A,H$, confidence level $\delta$,
    online linear optimization strategy $\varphi = (\varphi_t)_{t \geq 1}$ \smallskip}
    \KwResult{Sequence of policies $\bpi^t = (\pi_{t,h})_{h \in [H]}$, for $t \in [T]$ \smallskip}
    Initialize kernels $\hatP_h(s' | s,a) = 1/S$ for all $(s',s,a,h) \in \cS \times \cS \times \cA \times [H-1]$\;
    Initialize counters $n_h(s,a,s') = n_h(s,a) = 0$ for all $(s',s,a,h)\in \cS \times \cS \times \cA \times [H-1]$\;
    Initialize histories $\cH_{h,s} = \emptyset$ for all $h \in [H]$ and $s \in \cS$\;
    Initialize $\pi_{1, h}(\,\cdot \mid s) = \phi_1(\emptyset)$ for all $h \in [H]$ and $s \in \cS$\;
    Select initial state $s_1 \in \cS$\; \smallskip

    \For{rounds $t = 1,\ldots, T$}{
        \tcc{Interaction}
        Set $s_{t,1} = s_1$\;
        \For{$h=1,\ldots,H$}{
            Play action $a_{t,h} \sim \pi_{t,h}(\,\cdot \mid s_{t,h})$\;
            Receive next state $s_{t,h+1} \sim P_h(\,\cdot \mid s_{t,h}, a_{t,h})$\;
            Update counters $n_h(s_{t,h}, a_{t,h}) \pluseq 1 $ and $n_h(s_{t,h}, a_{t,h}, s_{t,h+1}) \pluseq 1$\;
            \tcc{Trigger, update the model}
            \If{$n_h(s_{t,h}, a_{t,h}) = 2^{\ell-1}$ for some $\ell \geq 1$}{
                $\displaystyle{\hatP_h(s' | s_{t,h}, a_{t,h}) = \frac{n_h(s_{t,h}, a_{t,h}, s')}{2^{\ell-1}}}$ for all $s' \in \cS$\;
                $\displaystyle{b_{h}(s_{t,h}, a_{t,h}) = \sqrt{\frac{2H^2\ln(2SATH \log_2(2T) / \delta)}{2^{\ell - 1}}} \wedge H}$\;
                Activate \texttt{trigger}\;
            }
        }
        Receive a reward function $\br_t = (r_{t, h})_{h \in [H]}$\;
        \eIf{\texttt{trigger}}{
            Drop all histories, i.e., set $\cH_{h,s} = \emptyset$ for all $h \in [H]$ and $s \in \cS$\;
            Set $\pi_{t+1, h}(\,\cdot \mid s) = \phi_t(\emptyset)$ for all $h \in [H]$ and $s \in \cS$\;
            Deactivate \texttt{trigger}\;
        }{
        \tcc{Compute first the advantage functions via Bellman's equations}
        Let $\hat{Q}_{H}(s,a) = r_{t,H}(s,a)$ and $\hat{V}_{h}(s) = \pi_{t,H} \bullet \hat{Q}_{H}(s)$ for each $(s,a) \in \cS \times \cA$\;
        \For{$h = H-1, \, H-2, \, \ldots, \, 1$}{
            Let $\hat{Q}_{h}(s,a) = r_{t,h}(s,a) + b_{h}(s,a) + \hatP_h \bullet \hat{V}_{h+1}(s,a)$
            and $\hat{V}_{h}(s) = \pi_{t,h} \bullet \hat{Q}_{h}(s)$ \\ ~~~~for each $(s,a) \in \cS \times \cA$\;
        }
        Let $\hat{A}_{h}(s,a) = \hat{Q}_{h}(s,a) - \hat{V}_{h}(s)$ for all $(s,a,h) \in \cS \times \cA \times [H]$\;
        Add $\bigl( \hat{A}_{h}(s,a) \bigr)_{a \in \cA}$ to the history $\cH_{h,s}$ for each $(h,s) \in [H] \times \cS$\;
        \tcc{Next, obtain $\bpi_{t+1}$ via the learning strategy}
        Let $\pi_{t+1,h}(\,\cdot \mid s) = \phi_t(\cH_{h,s})$ for all $(h,s) \in [H] \times \cS$\;
        }
    }
    \caption{\label{alg:main} Adversarial Policy Optimization based on Monotonic Value Propagation (\texttt{APO-MVP})}
\end{algorithm}

\paragraph{Closed-form expressions of the policy constructions.}
We first recall the statement~\eqref{eq:policies} for the construction of policies:
for all $t \geq 1$, all $h \in [H]$, and $s \in \cS$,
\[
\pi_{t,h}(\,\cdot \mid s) = \varphi_t \biggl( \Bigl( \hatA_{\tau,h}(s,\,\cdot\,)\Bigr)_{\tau \in \cE_{e_t} \cap [t-1]} \biggr)\,.
\]
We now illustrate this definition with the strategies
of Examples~\ref{ex:polpot} and~\ref{ex:adahedge}, as well as with \texttt{AdaHedge}.
A key observation to do so will be that, by definition of advantage functions
and since $\hatA_{\tau,h}(s,\,\cdot\,)$ is based on the policy~$\pi_{\tau,h}$,
\[
\forall \tau \in [T], \ \
\forall s \in \cS, \qquad
\sum_{a \in \cA} \pi_{\tau,h}(a \mid s) \, \hatA_{\tau,h}(s,a) = 0\,.
\]

\paragraph{Polynomial potential (Example~\ref{ex:polpot}).}
We denote by $(x)_+ = \max\{ x, \, 0\}$ the non-negative part of $x \in \R$.
We have $\varphi_1 \equiv (1/A, \, \ldots, 1/A)$ and for $t \geq 2$,
whenever $\cE_{e_t} \cap [t-1]$ contains at least one element,
\begin{align*}
\pi_{t,h}(a \mid s) & =
\frac{\displaystyle{\left( \sum_{\tau \in \cE_{e_t} \cap [t-1]}
\hatA_{\tau,h}(s,a) - \sum_{a'' \in \cA} \pi_{\tau,h}(a'' \mid s) \, \hatA_{\tau,h}(s,a'') \right)_{\!\!+}^{\! 2 \ln(A)}}}{
\displaystyle{\sum_{a' \in \cA} \left( \sum_{\tau \in \cE_{e_t} \cap [t-1]}
\hatA_{\tau,h}(s,a') - \sum_{a'' \in \cA} \pi_{\tau,h}(a'' \mid s) \, \hatA_{\tau,h}(s,a'') \right)_{\!\!+}^{\! 2 \ln(A)}}
} \\
& =
\frac{\displaystyle{\left( \sum_{\tau \in \cE_{e_t} \cap [t-1]}
\hatA_{\tau,h}(s,a) \right)_{\!\!+}^{\! 2 \ln(A)}}}{\displaystyle{\sum_{a' \in \cA} \left( \sum_{\tau \in \cE_{e_t} \cap [t-1]}
\hatA_{\tau,h}(s,a') \right)_{\!\!+}^{\! 2 \ln(A)}}}\,.
\end{align*}

\paragraph{Exponential potential (Example~\ref{ex:adahedge}).}
Similarly to above,
we have $\varphi_1 \equiv (1/A, \, \ldots, 1/A)$ and for $t \geq 2$,
whenever $\cE_{e_t} \cap [t-1]$ contains at least one element,
\[
\pi_{t,h}(a \mid s) =
\frac{\displaystyle{\exp \! \left( \eta_t \sum_{\tau \in \cE_{e_t} \cap [t-1]}
\hatA_{\tau,h}(s,a) \right)}}{
\displaystyle{\sum_{a' \in \cA} \exp \! \left( \eta_t \sum_{\tau \in \cE_{e_t} \cap [t-1]}
\hatA_{\tau,h}(s,a') \right)}
} \qquad \mbox{where} \qquad
\eta_t = \frac{1}{H+1} \sqrt{\frac{\ln(A)}{\bigl| \cE_{e_t} \cap [t-1] \bigr| }}
\]
are time-varying learning rates, based on the cardinality $\bigl| \cE_{e_t} \cap [t-1] \bigr|$
of $\cE_{e_t} \cap [t-1]$.

\paragraph{Adaptive versions of exponential-potential-based strategies.}
The literature proposed many ways of setting the learning rates for exponential
potentials based on past information---a series of work initiated by~\citet{ACBG02},
whose learning rates were used in the paragraph above.
One may cite, among (many) others, \citet{SOE},
\citet{EKRG11}, \citet{JMLR:v15:rooij14a}, \citet{OP15}; sometimes,
the resulting strategy is called \texttt{AdaHedge}.
For instance, \citet[Section~7.6]{O19} summarizes this literature by
the following learning rates:
\[
\eta_t = \frac{\max\{4, 2^{-1/4}\sqrt{\ln(A)}\}}{\sqrt{\displaystyle{\sum_{\tau \in \cE_{e_t} \cap [t-1]} \,\,
\max_{a' \in \cA} \bigl( \hatA_{\tau, h}(s, a') \bigr)^2}}}\,.
\]
These updates correspond to an OLO strategy satisfying the bound of Definition~\ref{def:adv_main}
with a performance bound $B_{T, K} = 4\sqrt{T \ln(K)}$.

\paragraph{Space and computational complexity.}
We begin by outlining the space and computational complexity of the policy optimization strategy discussed above. In both examples (Example~\ref{ex:polpot}-\ref{ex:adahedge}), it is clear that the computation of the policy relies solely on the sum of advantage estimates over the current epoch. These can be stored with a space complexity of $\mathcal{O}(HSA)$, and the computational complexity of updating the policy is $\mathcal{O}(HSA)$ per step, as it involves two key operations: (1) updating the sum of advantage estimates, and (2) updating the policy via a closed-form formula.

Now, the overall computations for one iteration of the final algorithm, outlined in Algorithm~\ref{alg:main}, consists of two main phases:
(1) the dynamic programming update, which requires $\mathcal{O}(S^2AH)$ calculations in the worst case, and
(2) the policy optimization update, which requires $\mathcal{O}(SAH)$ calculations, as previously described.
The space complexity of the algorithm is standard for model-based reinforcement learning algorithms and is equal to $\mathcal{O}(S^2AH)$.

In summary, the algorithm introduces no additional computational and space overhead beyond that of standard dynamic programming methods.

\section{Term~$\termB$}
\label{app:B}

It only remains to prove Lemma~\ref{lem:OLO_main}, which we restate below.
For the sake of self-completeness, we copy the proofs by \citet[in the $\gamma$--discounted setting]{JMS23}
and \citet[in the same $H$--episodic setting as the present article]{JMS25}.
As noted by \citet[Remark~1]{JMS25}, the proof below
may be adapted to strategies run with $\Q$--functions rather than with advantage functions:
\eqref{eq:csq-agreg-A} is then replaced by
\[
2M(H-h+1)\,B_{T,A} \geq
\max_{a \in \cA} \sum_{t=1}^T Q^{\bpi'_t,\br'_t,\bP'}_{h}(s,a)
- \sum_{t=1}^T \underbrace{\sum_{a \in \cA} \pi'_{t,h}(a|s) \, Q^{\bpi'_t,\br'_t,\bP'}_{h}(s,a)}_{= V^{\bpi'_t,\br'_t,\bP'}_{h}(s)}
=  \max_{a \in \cA} \sum_{t=1}^T A^{\bpi'_t,\br'_t,\bP'}_{h}(s,a) \,,
\]
and the rest of the proof is unchanged. However, advantage functions are preferred in practice.

\lmaggreg*

\begin{proof}
As the reward function takes values in~$[0,M]$, we have that $\bigl| A^{\bpi'_\tau}_{\tau,h}(s,a) \bigr| \leq M(H-h+1)$.
By the definition of advantage functions (for the equality to~$0$) and by
Definition~\ref{def:adv_main} (for the upper bound), we have, for all $s \in \cS$,
\begin{equation}
\label{eq:csq-agreg-A}
\max_{a \in \cA} \sum_{t=1}^T A^{\bpi'_t,\br'_t,\bP'}_{h}(s,a)
- \sum_{t=1}^T \underbrace{\sum_{a \in \cA} \pi'_{t,h}(a|s) \, A^{\bpi'_t,\br'_t,\bP'}_{h}(s,a)}_{=0}
\leq 2M(H-h+1)\,B_{T,A}\,.
\end{equation}
Now, the so-called performance difference lemma (see, e.g., \citealp{kakade2002approximately} for the result in the discounted setting) shows that
\[
V_1^{\bpi,\br'_t,\bP'}(s_1) - V_1^{\bpi'_t,\br'_t,\bP'}(s_1)
= \sum_{h=1}^H \sum_{s \in \cS} \mu_h^{\bpi,\bP',s_1}(s) \sum_{a \in \cA} \pi_h(a|s) \, A^{\bpi'_t,\br'_t,\bP'}_{h}(s,a)
\]
where $\mu_h^{\bpi,\bP',s_1}$ is the distribution of $s_{t,h}$
induced in the $h$--th episode by $\bpi$ given the state transitions $\bP'$ and the initial state $s_1$.
Summing this equality over $t$ and rearranging, we get
\begin{align*}
\sum_{t=1}^T \Bigl( V_1^{\bpi,\br'_t,\bP'}(s_1) - V_1^{\bpi'_t,\br'_t,\bP'}(s_1) \Bigr)
& = \sum_{h=1}^H \sum_{s \in \cS} \mu_h^{\bpi,\bP',s_1}(s) \sum_{a \in \cA} \pi_h(a|s) \sum_{t=1}^T A^{\bpi'_t,\br'_t,\bP'}_{h}(s,a) \\
& \leq \sum_{h=1}^H \sum_{s \in \cS} \mu_h^{\bpi,\bP',s_1}(s) \,\, \underbrace{\max_{a \in \cA} \sum_{t=1}^T A^{\bpi'_t,\br'_t,\bP'}_{h}(s,a)}_{\leq
2M(H-h+1)\,B_{T,A}} \\
&\leq 2M \sum_{h=1}^H (H-h+1) \, B_{T,A} = MH(H+1) \, B_{T,A} \leq 2MH^2 \, B_{T,A}\,,
\end{align*}
where we substituted~\eqref{eq:csq-agreg-A}. Here, we crucially used that the convex combination with weights
$\mu_h^{\bpi,\bP',s_1}(s)$ is independent of $t$ and only depends on the fixed benchmark policy~$\bpi$,
on the state transitions $\bP'$, and on the initial states $s_1$ (identical for all $t$).
\end{proof}

\section{Term~$\termA$}
\label{app:A}

We start with the following consequence of Hoeffding's inequality.

\begin{lemma}
\label{lem:concentration_event_optimism}
For each $t \in [T]$, for each $h \in [H-1]$,
for each $(s,a) \in \cS \times \cA$, for each $\ell \in \bigl[ \lceil \log_2(T) \rceil \bigr]$,
\begin{multline*}
\hspace{-.2cm}
\P \left\{ \Biggl| \frac{1}{2^{\ell-1}} \sum_{j=1}^{2^{\ell-1}} V^{\bpi^\star, \br_t, \bP}_{h+1}(\sigma_{h,s,a,j}) -
P_h \bullet V^{\bpi^\star, \br_t, \bP}_{h+1}(s,a) \Biggr| > \sqrt{\frac{2H^2 \ln(2SATH\log_2(2T)/\delta)}{2^{\ell-1}}} \wedge H \right\} \\
 \leq \frac{\delta}{SATH\log_2(2T)}\,,
\end{multline*}
where $(\sigma_{h,s,a,j})_{1 \leq j \leq 2^{\ell-1}}$ is a sequence of i.i.d.\ variables with distribution $P_{h}(\,\cdot \mid s,a)$.
\end{lemma}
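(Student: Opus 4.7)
The plan is to apply Hoeffding's inequality to i.i.d.\ bounded random variables, once we notice that the value function appearing in the statement can be treated as a fixed, deterministic function.

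First, I would argue that for each fixed $(t,h)$, the mapping $g \colon \cS \to \R$ defined by $g(s') \eqdef V^{\bpi^\star, \br_t, \bP}_{h+1}(s')$ is deterministic and takes values in $[0,H]$. Determinism uses crucially that the adversary is oblivious: the whole sequence $(\br_\tau)_{\tau \geq 1}$ is fixed in advance, hence so is the comparator $\bpi^\star$ defined by the maximum in~\eqref{eq:defRT}, and of course $\bP$ is also fixed. The bound in $[0,H]$ is immediate from $r_{t,h} \in [0,1]$ together with the episode length~$H$. In particular, $g$ is independent of the sequence $(\sigma_{h,s,a,j})_{j \geq 1}$ which, by definition, is i.i.d.\ with distribution $P_h(\,\cdot \mid s,a)$ and independent of everything else.

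Second, the variables $X_j \eqdef g(\sigma_{h,s,a,j})$ for $j \in \bigl[2^{\ell-1}\bigr]$ are therefore i.i.d., bounded in $[0,H]$, with common expectation
\[
\E[X_j] = \sum_{s' \in \cS} P_h(s' \mid s,a) \, g(s') = P_h \bullet V^{\bpi^\star, \br_t, \bP}_{h+1}(s,a)\,.
\]
I would then invoke Hoeffding's inequality with $n = 2^{\ell-1}$ and deviation $\epsilon = \sqrt{2H^2 \ln(J)/2^{\ell-1}}$, where $J = 2SATH\log_2(2T)/\delta$, to obtain a two-sided tail bound of $2 \exp\!\bigl( -2 n \epsilon^2 / H^2 \bigr) = 2 \exp(-\ln J) = 2/J = \delta/\bigl(SATH\log_2(2T)\bigr)$, which is precisely the right-hand side of the claim.

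Finally, for the $\wedge H$ truncation: since both the empirical mean $\tfrac{1}{2^{\ell-1}} \sum_j X_j$ and the true mean $P_h \bullet V^{\bpi^\star, \br_t, \bP}_{h+1}(s,a)$ lie in $[0,H]$, their absolute difference is at most $H$ deterministically. Thus, in the regime where the Hoeffding threshold exceeds $H$, the event in the probability is empty and contributes $0$; in the other regime, the minimum equals the Hoeffding threshold and the calculation above applies verbatim. I do not anticipate a real obstacle in the proof: the only subtle point is the use of obliviousness to make $g$ a deterministic function, which is what allows the direct application of Hoeffding to a fixed i.i.d.\ sample drawn from $P_h(\,\cdot \mid s,a)$ (as supplied by Doob's optional skipping, cf.\ Corollary~\ref{cor:Doob}).
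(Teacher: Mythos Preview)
Your proposal is correct and follows essentially the same approach as the paper: use obliviousness to treat $V^{\bpi^\star,\br_t,\bP}_{h+1}$ as a deterministic function, apply Hoeffding's inequality to the i.i.d.\ $[0,H]$-valued variables $g(\sigma_{h,s,a,j})$, and handle the $\wedge H$ via the trivial range bound. One minor arithmetic slip: with your choice of $\epsilon$ the Hoeffding exponent is $-2n\epsilon^2/H^2 = -4\ln J$, not $-\ln J$; this only strengthens the bound, so the conclusion is unaffected.
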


\begin{proof}
The policy $\bpi^\star$ is fixed, as it only depends on the $\br_t$ and $\bP$, which are all fixed beforehand.
The function $g = V^{\bpi^\star, \br_t, \bP}_{h+1}$ is therefore a fixed deterministic function.
The expectation of $g(\sigma_{h,s,a,j})$ is indeed, given our notation, $P_h \bullet g$.
By the boundedness of rewards in $[0,1]$, and thus the boundedness of values in the range $[0,H]$, we
may therefore apply Hoeffding's inequality: we do so for each $t \in [T]$, each $(s,a,h) \in \cS \times \cA \times [H-1]$, and
each $\ell \in \bigl[ \lceil \log_2(T) \rceil \bigr]$, and get that for all $\delta' \in (0,1)$, with probability at least $1-\delta'$,
\[
\Biggl| \frac{1}{2^{\ell-1}} \sum_{j=1}^{2^{\ell-1}} V^{\bpi^\star, \br_t, \bP}_{h+1}(\sigma_{h,s,a,j}) -
P_h \bullet V^{\bpi^\star, \br_t, \bP}_{h+1}(s,a) \Biggr|
\leq \sqrt{\frac{2H^2 \ln(2/\delta')}{2^{\ell-1}}}\,.
\]
The proof is concluded by keeping in mind that the left-hand side necessarily belongs to $[0,H]$
by boundedness of values in $[0,H]$.
\end{proof}

We are now ready to prove Lemma~\ref{lem:optimism}, which we restate below; we
do so by mimicking the proof of \citet[Lemma 18]{AGM17}.

\lmtermA*

\begin{proof}
We proceed by backward induction, for each given $t \in [T]$;
more precisely, we consider, for $h \in [H]$, the induction hypothesis
\begin{align}
\label{eq:cHh}
\tag{$\cH_h$}
\forall (s,a) \in \cS \times \cA,
\qquad \qquad \qquad & Q^{\bpi^\star, \br_t, \bP}_h(s,a) \leq Q^{\bpi^\star, \br_t + \bb_t, \hatbP_t}_h(s,a) \\
\nonumber
\text{and}\qquad
& V^{\bpi^\star, \br_t, \bP}_h(s) \leq V^{\bpi^\star, \br_t + \bb_t, \hatbP_t}_h(s)\,.
\end{align}
For $h = H$, we note that for all $(s,a)$,
\[
Q^{\bpi^\star, \br_t, \bP}_H(s,a) = r_{t, H}(s,a) = r_{t,H}(s,a) + b_{t,H}(s,a) = Q^{\bpi^\star, \br_t + \bb_t, \hatbP_t}_H(s,a)\,,
\]
so that ($\cH_H$) is trivially satisfied.
For $h \in [H-1]$, by Bellman equations,
\[
Q^{\bpi^\star, \br_t + \bb_t, \hatbP_t}_h(s,a) - Q^{\bpi^\star, \br_t, \bP}_h(s,a) =
b_{t,h}(s,a) + \hatP_{t,h} \bullet V^{\bpi^\star, \br_t + \bb_t, \hatbP_t}_{h+1} (s,a) - P_h \bullet V^{\bpi^\star, \br_t, \bP}_{h+1}(s,a) \,,
\]
where by the induction hypothesis ($\cH_{h+1}$), we have $V^{\bpi^\star, \br_t + \bb_t, \hatbP^t}_{h+1}(s') \geq V^{\bpi^\star, \br_t, \bP}_{h+1}(s')$ for any $s'$. Thus,
\begin{equation}\label{eq:optimism_final_term}
Q^{\bpi^\star, \br_t + \bb_t, \hatbP_t}_h(s,a) - Q^{\bpi^\star, \br_t, \bP}_h(s,a) \geq b_{t,h}(s,a) + \bigl( \hatP_{t,h} - P_h \bigr) \bullet
V^{\bpi^\star, \br_t, \bP}_{h+1}(s,a)\,,
\end{equation}
and a similar inequality for values, as the latter are obtained as convex combinations of
$\Q$--values, where the convex weights are determined solely by the common policy $\bpi^\star$ used.
Therefore, ($\cH_{h}$) holds at least on the event
\[
\cG_{t,h} \eqdef \biggl\{ \forall (s,a) \in \cS \times \cA, \quad
\Bigl| \bigl( \hatP_{t,h} - P_h \bigr) \bullet V^{\bpi^\star, \br_t, \bP}_{h+1}(s,a) \Bigr| \leq b_{t,h}(s,a) \biggr\}\,.
\]
All in all, the inequalities required in the statement of the lemma thus hold on the intersection of the events
$\cG_{t,h}$ over $t \in [T]$ and $h \in [H-1]$.

To conclude the proof, it suffices to show that this intersection is of probability at least $1-\delta$.
By considering the complements and by a union bound, it suffices to show that any event
\[
\overline{\cG}_{t,h,s,a} \eqdef \biggl\{
\Bigl| \bigl( \hatP_{t,h} - P_h \bigr) \bullet V^{\bpi^\star, \br_t, \bP}_{h+1}(s,a) \Bigr| > b_{t,h}(s,a) \biggr\}
\]
is of probability at most $\delta / (SATH)$.
We partition the probability space based on the value of $\ell_{t-1,h}(s,a)$,
resort to optional skipping and Corollary~\ref{cor:Doob},
with the deterministic function $g = V^{\bpi^\star, \br_t, \bP}_{h+1}$
(see the proof of Lemma~\ref{lem:concentration_event_optimism}), to get the first inequality below,
and to Lemma~\ref{lem:concentration_event_optimism} for the second inequality below.
We also use the
definition~\eqref{eq:defbe} of $b_{t,h}$:
\begin{align*}
\P & \bigl( \,\overline{\cG}_{t,h,s,a} \bigr)
= \sum_{\ell = 0}^{\lceil \log_2(T) \rceil} \P \Bigl( \overline{\cG}_{t,h,s,a} \cap \bigl\{ \ell_{t-1,h}(s,a) = \ell \bigr\} \Bigr) \\
& = \sum_{\ell = 1}^{\lceil \log_2(T) \rceil} \P \biggl\{
\Bigl| \bigl( \hatP_{t,h} - P_h \bigr) \bullet V^{\bpi^\star, \br_t, \bP}_{h+1}(s,a) \Bigr|
> \sqrt{\frac{2H^2 \ln(2SATH\log_2(2T)/\delta)}{2^{\ell-1}}} \wedge H \quad \mbox{and} \quad \ell_{t-1,h}(s,a) = \ell \, \biggr\} \\
& \leq
\sum_{\ell = 1}^{\lceil \log_2(T) \rceil} \P \left\{ \Biggl| \frac{1}{2^{\ell-1}} \sum_{j=1}^{2^{\ell-1}} V^{\bpi^\star, \br_t, \bP}_{h+1}(\sigma_{h,s,a,j}) -
P_h \bullet V^{\bpi^\star, \br_t, \bP}_{h+1}(s,a) \Biggr| > \sqrt{\frac{2H^2 \ln(2SATH\log_2(2T)/\delta)}{2^{\ell-1}}} \wedge H \right\} \\
& \leq \lceil \log_2(T) \rceil \, \frac{\delta}{SATH\log_2(2T)} \leq \frac{\delta}{SATH}\,,
\end{align*}
where we used the fact that $b_{t,h}(s,a) = H$ is a trivial upper bound on the
difference of values at hand in the case $\ell = 0$, which is why the element $\ell = 0$
gets dropped in the summation in the second equality.
\end{proof}

\section{Term~$\termD$}
\label{app:D}

We first restate and then prove Lemma~\ref{lem:termD_main}.

\lmtermD*

\begin{proof}
Since $b_{t, h}(s, a) \in [0,H]$, the Hoeffding--Azuma inequality implies that with probability at least $1-\delta$,
\begin{equation}
\label{eq:D-init}
\sum_{t=1}^T V^{\bpi_t, \bb_t, \bP}_1(s_1)
- \sum_{t=1}^T \sum_{h \in [H]} b_{t, h}(s_{t,h}, a_{t,h}) \leq \sqrt{\frac{(H^2)^2 \, T \ln(1/\delta)}{2}}\enspace;
\end{equation}
we crucially use here that the policies $\bpi_t$ only depend on information gathered during previous episodes $\tau \leq t-1$
and that the stochastic environment $\bP$ considered in the definition of $\termD$ is the true underlying environment.

We fix $h \in [H-1]$ (recall that $b_{t,H} \equiv 0$) and a pair $(s',a') \in \cS \times \cA$,
and show that
\begin{equation}
\label{eq:D-interm}
\sum_{t=1}^T  b_{t, h}(s_{t,h}, a_{t,h})
\, \ind\big\{(s_{t,h,}, a_{t,h}) = (s',a')\big\} \leq
H + \sqrt{2H^2\ln(2SATH\log_2(2T)/\delta) \, n_{T,h}(s',a')}\,.
\end{equation}
Indeed, there can only be at most one $t$ such $(s_{t,h,}, a_{t,h}) = (s',a')$
and $n_{t,h}(s',a') = 1$; for this $t$, we use the upper bound $b_{t, h}(s_{t,h}, a_{t,h}) \leq H$.
For $t$ such that $n_{t,h}(s',a') \geq 2$, we have,
by the definitions in Section~\ref{sec:defalgo},
that $2^{\ell_{t,h}(s',a')} \geq n_{t,h}(s',a')$ and $\ell_{t-1,h}(s',a')+1 \geq \ell_{t,h}(s',a') \geq \ell_{t-1,h}(s',a')$.
Therefore,
$2^{\ell_{t-1,h}(s',a')-1} \geq 2^{\ell_{t,h}(s',a') -2} \geq n_{t,h}(s',a')/4$.
Substituting this inequality in the definition~\eqref{eq:defbe} of $b_{t,h}$,
we obtain
\begin{multline*}
\sum_{t=1}^T b_{t, h}(s_{t,h}, a_{t,h})
\, \ind\big\{(s_{t,h,}, a_{t,h}) = (s',a')\big\} \\
\leq H + \sum_{t=1}^T \sqrt{\frac{8H^2\ln(2SATH\log_2(2T)/\delta)}{n_{t,h}(s',a')}}
\, \ind\big\{(s_{t,h,}, a_{t,h}) = (s',a')\big\} \, \ind\big\{n_{t,h}(s',a') \geq 2\big\}\,,
\end{multline*}
where, using that the counters $n_{t,h}(s',a')$ vary (by $+1$)
if and only if $(s_{t,h,}, a_{t,h}) = (s',a')$, we also have
\[
\sum_{t=1}^T \frac{1}{\sqrt{n_{t,h}(s',a')}}
\, \ind\big\{(s_{t,h,}, a_{t,h}) = (s',a')\big\} \, \ind\big\{n_{t,h}(s',a') \geq 2\big\}
= \!\!\!\! \sum_{n=2}^{n_{T,h}(s',a')} \frac{1}{\sqrt{n}}
\leq 2\sqrt{n_{T,h}(s',a')}\,.
\]

We conclude the proof by noting first that for each $(s',a') \in \cS \times \cA$,
by concavity of the root,
\[
\sum_{(s',a')} \sqrt{n_{T,h}(s',a')} \leq \sqrt{SAT}\,,
\]
so that summing~\eqref{eq:D-interm} over $h \in [H-1]$ and $(s',a') \in \cS \times \cA$
yields
\[
\sum_{t=1}^T \sum_{h \in [H]} b_{t, h}(s_{t,h}, a_{t,h})
\leq H^2 SA + 4H \sqrt{2H^2 SAT \ln(2SATH\log_2(2T)/\delta)}\,.
\]
We combine this inequality with~\eqref{eq:D-init}
and note that
\[
\sqrt{\frac{H^4 \, T \ln(1/\delta)}{2}} \leq
\sqrt{H^4 SAT \ln(2SATH\log_2(2T)/\delta)}
\]
to get the claimed bound.
\end{proof}

\section{Term~$\termC$}
\label{app:C}

This section is devoted to the analysis of the term $\termC$, which is the most involved part of the proof.
We leverage the recently developed techniques of \citet{zhang2023settling}.
We start by restating the claimed bound.

\lmtermC*

The proof starts with an application of the performance-difference lemma in case of
different transition kernels (see, e.g., \citealp[Lemma 3]{russo2019worst}):
\[
V_1^{\bpi_t, \br_t + \bb_t, \hatbP_{t}}(s_1) - V_1^{\bpi_t, \br_t + \bb_t, \bP}(s_1)
= \E_{\bpi_t, \bP}\!\left[ \sum_{h=1}^{H-1} \bigl( \hatP_{t, h} - P_h \bigr)
\bullet V_{h+1}^{\bpi_t, \br_t + \bb_t, \hatbP_{t}}(s'_{h}, a'_{h}) \right],
\]
where the piece of notation $\E_{\bpi_t, \bP}$ indicates (as in Section~\ref{sec:setup})
that the expectation is taken over trajectories
$(s'_1,a'_1,\,\ldots,\,s'_H,a'_H)$ started at $s'_1 = s_1$ and induced by the policies
$\bpi_t$ and the transition kernels $\bP$. Actually, one such trajectory is exactly
$(s_{t,1},a_{t,1},\,\ldots,\,s_{t,H},a_{t,H})$ and we could rewrite the considered
expectation as a conditional expectation:
\[
\E_{\bpi_t, \bP}\!\left[ \sum_{h=1}^{H-1} \bigl( \hatP_{t, h} - P_h \bigr)
\bullet V_{h+1}^{\bpi_t, \br_t + \bb_t, \hatbP_{t}}(s'_{h}, a'_{h}) \right] = \E\!\left[ \sum_{h=1}^{H-1} \bigl( \hatP_{t, h} - P_h \bigr)
\bullet V_{h+1}^{\bpi_t, \br_t + \bb_t, \hatbP_{t}}(s_{t,h}, a_{t,h}) \,\bigg|\, \bpi_t, \, \bb_t, \, \hatbP_{t} \right].
\]
Next, we apply the Hoeffding--Azuma inequality, by resorting to a lexicographic order on pairs $(t,h)$
and by noting that the random variables at hand satisfy
\[
\bigl( \hatP_{t, h} - P_h \bigr) \bullet V_{h+1}^{\bpi_t, \br_t + \bb_t, \hatbP_{t}}(s_{t,h}, a_{t,h}) \in [-H^2,H^2]\,;
\]
indeed, the sums $r_{t,h}+b_{t,h}$ lies in $[0,H+1]$ and the value functions
are weighted sums of at most $H-1$ such terms. We obtain that with probability at least $1-\delta/2$,
\begin{align}
\nonumber
\termC & = \sum_{t=1}^T V_1^{\bpi_t, \br_t + \bb_t, \hatbP_{t}}(s_1) - V_1^{\bpi_t, \br_t + \bb_t, \bP}(s_1) \\
\label{eq:termC-HAz1}
& \leq \sum_{t=1}^T \sum_{h=1}^{H-1} \bigl( \hatP_{t, h} - P_h \bigr) \bullet V_{h+1}^{\bpi_t, \br_t + \bb_t, \hatbP_{t}}(s_{t,h}, a_{t,h})
+ \sqrt{2 H^5 T \ln(2/\delta)}\,.
\end{align}
We fix $h \in [H-1]$ and use the decomposition
\begin{align}
\nonumber
 \sum_{t=1}^T \bigl( \hatP_{t, h} - P_h \bigr) &\bullet V_{h+1}^{\bpi_t, \br_t + \bb_t, \hatbP_{t}}(s_{t,h}, a_{t,h}) \\
\label{eq:sum-h-fixed}
&\leq  SAH^2 +
\sum_{t=1}^T \sum_{(s,a)}
\sum_{\ell=1}^{\lceil \log_2 T \rceil} \sum_{j=1}^{2^{\ell-1}}
\bigl( \hatP_{t, h} - P_h \bigr) \bullet V_{h+1}^{\bpi_t, \br_t + \bb_t, \hatbP_{t}}(s,a)
\, \ind\bigl\{ (s_{t,h}, a_{t,h}) = (s,a) \bigr\} \\[-.25cm]
\nonumber
& \hspace{7cm} \times \ind\bigl\{ n_{t,h}(s,a) = 2^{\ell-1} + j \bigr\}\,;
\end{align}
the term $SA H^2$ comes from the fact that for each pair $(s,a)$, there is
at most once round $t$ when $(s_{t,h}, a_{t,h}) = (s,a)$ and $ n_{t,h}(s,a) = 1$.
We prove below the following lemma.
\begin{lemma}
\label{lm:main-termC}
For each pair $(\ell,j)$, where $\ell \in \bigl[ \lceil \log_2 T \rceil \bigr]$ and $j \in [2^{\ell-1}]$,
with probability at least $1-\delta/(4TH)$,
\begin{multline*}
\sum_{t=1}^T \sum_{(s,a)} \bigl( \hatP_{t, h} - P_h \bigr) \bullet V_{h+1}^{\bpi_t, \br_t + \bb_t, \hatbP_{t}}(s,a)
\, \ind\bigl\{ (s_{t,h}, a_{t,h}) = (s,a) \bigr\} \, \ind\bigl\{ n_{t,h}(s,a) = 2^{\ell-1} + j \bigr\} \\
\leq
\sqrt{2H^4 \, \frac{1}{2^{\ell-1}} \sum_{(s,a)} \ind\bigl\{ n_{T,h}(s,a) \geq 2^{\ell-1} + j \bigr\}
\ln \biggl( \frac{4 H (T+1)^{1+SAH \ilog{2}{T}}}{\delta} \biggr)}\,.
\end{multline*}
\end{lemma}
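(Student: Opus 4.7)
The plan is to reduce the left-hand side to a single-round-per-pair sum, to invoke Doob's optional skipping to treat the underlying transitions as i.i.d., and finally to apply Hoeffding's inequality after a union bound that decouples the value function from the samples on which it is evaluated.

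First, for each $(s,a)\in\cS\times\cA$ at most one round $t\in[T]$ simultaneously triggers both indicators, namely $\tau_{s,a}:=N_{h,s,a,2^{\ell-1}+j}\in[T]\cup\{+\infty\}$. On $\{\tau_{s,a}\leq T\}$ one has $2^{\ell-1}\leq n_{\tau_{s,a}-1,h}(s,a)<2^{\ell}$, so $\ell_{\tau_{s,a}-1,h}(s,a)=\ell$ and~\eqref{eq:defPe} yields $\hatP_{\tau_{s,a},h}(\,\cdot\mid s,a)=(1/2^{\ell-1})\sum_{i=1}^{2^{\ell-1}}\delta_{\ts_{h,s,a,i}}$. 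Setting $\tV_{s,a,h+1}:=V_{h+1}^{\bpi_{\tau_{s,a}},\br_{\tau_{s,a}}+\bb_{\tau_{s,a}},\hatbP_{\tau_{s,a}}}\in[0,H^2]$, the quantity to be bounded becomes
\[
\sum_{(s,a)}\frac{1}{2^{\ell-1}}\sum_{i=1}^{2^{\ell-1}}\bigl[\tV_{s,a,h+1}(\ts_{h,s,a,i})-P_h\bullet\tV_{s,a,h+1}(s,a)\bigr]\ind\{\tau_{s,a}\leq T\}.
\]

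Second, by Lemma~\ref{lm:Doob} applied to the event that fixes all terminal counts, the $\ts_{h,s,a,i}$ are i.i.d.\ with distribution $P_h(\,\cdot\mid s,a)$ and independent across $(s,a)$. The delicate point is that $\tV_{s,a,h+1}$ depends on these very samples through the policies $\bpi_{\tau_{s,a}}$ and the kernel estimates at the other stages. I would circumvent this by a union bound over a discrete family of \emph{configurations} that, jointly with the i.i.d.\ realizations, fully determine $\tV_{s,a,h+1}$: the round $\tau_{s,a}\in\{0,\dots,T\}$ (one $T+1$ factor), and, for each of the $SAH$ triplets $(s',a',h')$ and each local-epoch index $k\in[\ilog{2}{T}]$, the switch time $N_{h',s',a',2^{k-1}}\in\{0,\dots,T\}$ at which that local epoch was reached, for a further $(T+1)^{SAH\ilog{2}{T}}$ possibilities. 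Combined with the $H$ choices of stage and a factor $4$ absorbing Hoeffding's two-sided constant together with the split of $\delta$, this accounts for the counting factor $4H(T+1)^{1+SAH\ilog{2}{T}}$ inside $\ln(1/\delta')$. Conditionally on any such configuration, $\tV_{s,a,h+1}$ becomes deterministic and independent of the relevant i.i.d.\ samples---which may be represented by the external copies $\sigma_{h,s,a,i}$ of Corollary~\ref{cor:Doob}---and Hoeffding's inequality applied to the $2^{\ell-1}\cdot N_{\ell,j}$ centred summands (each of magnitude at most $H^2/2^{\ell-1}$), with $N_{\ell,j}:=\sum_{(s,a)}\ind\{n_{T,h}(s,a)\geq 2^{\ell-1}+j\}$, then yields the claimed $\sqrt{2H^4 N_{\ell,j}/2^{\ell-1}\cdot\ln(1/\delta')}$ deviation.

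The main obstacle is precisely this decoupling. A naive concentration of $\hatP_{\tau_{s,a},h}-P_h$ in $\ell_1$-norm would reintroduce the undesired $\sqrt{S}$ factor; it is therefore essential to keep the scalar quantity $(\hatP_{\tau_{s,a},h}-P_h)\bullet\tV_{s,a,h+1}(s,a)$ intact and to freeze $\tV_{s,a,h+1}$ via the profile-based union bound, at the modest cost of an extra $SAH\ilog{2}{T}\ln(T+1)$ term inside $\ln(1/\delta')$ that propagates only to the $(\log_2(2T))^3$ polylog factor in the final bound.
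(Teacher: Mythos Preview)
Your overall roadmap---stopping times $\tau_{s,a}$, optional skipping, a union bound over profiles, then Hoeffding---is the paper's argument. But the decoupling step has a real gap. You write that ``conditionally on any such configuration, $\tV_{s,a,h+1}$ becomes deterministic.'' This is false: fixing all switch times $N_{h',s',a',2^{k-1}}$ pins down the profile, hence the bonuses $\bb_{\tau_{s,a}}$ and the epoch structure, but $\tV_{s,a,h+1}=V_{h+1}^{\bpi_{\tau_{s,a}},\br_{\tau_{s,a}}+\bb_{\tau_{s,a}},\hatbP_{\tau_{s,a}}}$ still depends on the \emph{realizations} of the empirical kernels $\hatP_{\tau_{s,a},h'}(\,\cdot\mid s',a')$ for $h'\geq h+1$, and on the policies $\pi_{\tau_{s,a},h'}$, which in turn depend on past empirical kernels through the advantage functions fed to the OLO strategy. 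The switch times tell you \emph{how many} samples each kernel estimate averages, not \emph{which} next-states were observed.

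What actually makes the argument go through is a structural observation you did not articulate: by Bellman's equations~\eqref{eq:value_update}, $\hatV_{t,h+1}$ is built only from the $\hatP_{t,h'}$, the $\pi_{t,h'}$, and the rewards/bonuses at stages $h'\geq h+1$; and by the form of the update~\eqref{eq:policies}, once the epoch is fixed by the profile, each $\pi_{t,h'}$ with $h'\geq h+1$ depends only on advantage functions $\hatA_{\tau,h'}$, which again involve only stages $\geq h'$. Hence, conditional on the profile, $\tV_{s,a,h+1}$ is measurable with respect to transitions at stages $h+1,\ldots,H-1$ only, and optional skipping (Lemma~\ref{lm:Doob}) then makes it independent of the stage-$h$ samples $\sigma_{h,s,a,i}$ for \emph{all} $(s,a)$ simultaneously. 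So $\tV_{s,a,h+1}$ is not deterministic---it is random but independent of the relevant samples---and it is this backward stage-separation, not your conditioning, that licenses the Hoeffding--Azuma step. You correctly flag the decoupling as the main obstacle; replace the ``deterministic'' claim by this stage-wise measurability argument and your sketch becomes the paper's proof.
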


\begin{remark}
\label{rk:H5}
Lemma~\ref{lm:main-termC} is established for a fixed $h$, with the derived bound scaling as $\sqrt{H^5}$ in terms of the episode length. Consequently, the final bound on term-$\termC$ scales as $H \sqrt{H^5} = \sqrt{H^7}$ by summing the bound of Lemma~\ref{lm:main-termC} over $h \in [H]$.

A potential refinement of the bound on term $\termC$ involves incorporating the summation over $h$ appearing in~\eqref{eq:termC-HAz1} and proving a strengthened version of Lemma~\ref{lm:main-termC}. In particular, taking the sum over $h$ into account (instead of a fixed value $h$) would replace $\sqrt{H^5}$ in Lemma~\ref{lm:main-termC} by $\sqrt{H^6}$. Since no more summation over $h$ is needed, the final bound would also scale as $\sqrt{H^6}$.

For the sake of simplicity and because the control of term~$\termB$ leads anyway to a factor
$\sqrt{H^7}$ in the final regret bound, we do not pursue this direction.
\end{remark}

We conclude the proof of Lemma~\ref{lem:termC_main} based on Lemma~\ref{lm:main-termC}.
There are at most $2T$ different pairs $(\ell,j)$ considered, so that all events considered
in Lemma~\ref{lm:main-termC} hold simultaneously with probability at least $1-\delta/(2H)$. In addition,
a first application of Jensen's inequality guarantees that for each $1 \leq \ell \leq \lceil \log_2 T \rceil$,
\[
\sum_{j=1}^{2^{\ell-1}}
\sqrt{\frac{1}{2^{\ell-1}} \sum_{(s,a)} \ind\bigl\{ n_{T,h}(s,a) \geq 2^{\ell-1} + j \bigr\}}
\leq \sqrt{\sum_{(s,a)} \sum_{j=1}^{2^{\ell-1}} \ind\bigl\{ n_{T,h}(s,a) \geq 2^{\ell-1} + j \bigr\}}\,,
\]
and a second application yields
\begin{multline*}
\sum_{\ell=1}^{\lceil \log_2 T \rceil}
\sum_{j=1}^{2^{\ell-1}}
\sqrt{\frac{1}{2^{\ell-1}} \sum_{(s,a)} \ind\bigl\{ n_{T,h}(s,a) \geq 2^{\ell-1} + j \bigr\}} \\
\leq
\sqrt{\lceil \log_2 T \rceil
\sum_{(s,a)} \underbrace{\sum_{\ell=1}^{\lceil \log_2 T \rceil} \sum_{j=1}^{2^{\ell-1}}
\ind\bigl\{ n_{T,h}(s,a) \geq 2^{\ell-1} + j \bigr\}}_{= \, n_{T,h}(s,a) - 1}} \leq
\sqrt{T \lceil \log_2 T \rceil}
\,.
\end{multline*}
Therefore,
substituting the bound above (together with Lemma~\ref{lm:main-termC}) into~\eqref{eq:sum-h-fixed},
we proved so far that with probability at least $1-\delta/2$,
\begin{align*}
\sum_{t=1}^T \bigl( \hatP_{t, h} - P_h \bigr) &\bullet V_{h+1}^{\bpi_t, \br_t + \bb_t, \hatbP_{t}} (s_{t,h}, a_{t,h}) \\
& \leq SA H^2 +
\sqrt{2H^4 \, T \lceil \log_2 T \rceil \ln\biggl(\frac{4 H (T+1)^{1+SAH \ilog{2}{T}}}{\delta}\biggr)} \\
& \leq SA H^2 + 2\sqrt{H^5 SA \, T \bigl( \log_2(2T) \bigr)^3} + \sqrt{2H^4 \, T \lceil \log_2 T \rceil \ln(4H/\delta)}\,.
\end{align*}
Summing this bound over $h \in [H-1]$ and combining the outcome with~\eqref{eq:termC-HAz1} leads to
\[
\termC \leq
SA H^3 +
2\sqrt{H^7 SA \, T \bigl( \log_2(2T) \bigr)^3} + \sqrt{2H^6 \, T \lceil \log_2 T \rceil \ln(4H/\delta)}
+ \sqrt{2 H^5 T \ln(2/\delta)}\,,
\]
and thus to the upper bound claimed in Lemma~\ref{lem:termC_main}.

It therefore only remains to prove Lemma~\ref{lm:main-termC}.

\begin{proof}
We denote by
\[
\tau_{\ell,j,h}(s,a) \eqdef
\begin{cases}
t & \mbox{if} \ (s_{t,h},a_{t,h}) = (s,a) \ \mbox{and} \ n_{t,h}(s,a) = 2^{\ell-1} + j\,, \\
+\infty & \mbox{if} \ n_{T,h}(s,a) \leq 2^{\ell-1} + j-1\,,
\end{cases}
\]
the stopping time whether and when $(s,a)$ was reached in stage~$h$ for the $(2^{\ell-1} + j)$--th time,
with the convention $\tau_{\ell,j,h}(s,a) = +\infty$ if $(s,a)$ was reached fewer times than that.
To apply optional skipping, we
will partition the underlying probability space according
to the values of all the $\ell_{t',h'}(s',a')$ as $t',h',s',a'$ vary
and of the $\tau_{\ell,j,h}(s',a')$ as $s',a'$ only vary.

\emph{Part 1: Hoeffding--Azuma inequality.}
We fix consistent sequences $k_{t',h'}(s',a') \in [T]$ and $\kappa_{\ell,j,h}(s',a')$ of values
for the $\ell_{t',h'}(s',a')$ and the $\tau_{\ell,j,h}(s',a')$;
in particular, $k_{\kappa_{\ell,j,h}(s,a)-1,h}(s,a) = \ell$.
The notation in the display below is heavy but the high-level idea is simple to grasp: only
rounds $t = \kappa_{\ell,j,h}(s,a)$ matter, and we know to which global epoch each of these rounds
belongs and, in particular, we know which averages are in the components
$\hatP_{t, h}(\,\cdot \mid s',a')$ of $\hatP_{t, h}$.

We rewrite the quantity at hand on the event associated with the sequences fixed:
\begin{align}
\nonumber
& \sum_{t=1}^T \sum_{(s,a)} \bigl( \hatP_{t, h} - P_h \bigr) \bullet V_{h+1}^{\bpi_t, \br_t + \bb_t, \hatbP_{t}}(s,a)
\, \ind\bigl\{ (s_{t,h}, a_{t,h}) = (s,a) \bigr\} \, \ind\bigl\{ n_{t,h}(s,a) = 2^{\ell-1} + j \bigr\} \\[-.3cm]
\nonumber
& \times \prod_{(s',a') \in \cS \times \cA} \left(
\ind\bigl\{ \tau_{\ell,j,h}(s',a') = \kappa_{\ell,j,h}(s',a') \bigr\} \prod_{t'=1}^T \prod_{h' \in [H-1]}
\ind\bigl\{ \ell_{t',h'}(s',a') = k_{t',h'}(s',a') \bigr\} \right) \\
\nonumber
\leq \ & \sum_{(s,a) : \kappa_{\ell,j,h}(s,a) \leq T} \bigl( \hatP_{\kappa_{\ell,j,h}(s,a), h} - P_h \bigr) \bullet \hatV_{\kappa_{\ell,j,h}(s,a),h+1}(s,a) \\[-.3cm]
\label{eq:C-1}
& \hspace{2.5cm} \times \prod_{(s',a') \in \cS \times \cA} \prod_{h' \in [H-1]}
\ind\bigl\{ \ell_{\kappa_{\ell,j,h}(s,a)-1,h'}(s',a') = k_{\kappa_{\ell,j,h}(s,a)-1,h'}(s',a') \bigr\}\,,
\end{align}
where we used the short-hand notation $\hatV_{t,h+1} \eqdef V_{h+1}^{\bpi_t, \br_t + \bb_t, \hatbP_{t}}$.

We are now ready to apply optional skipping---a concept recalled in Section~\ref{sec:add-concepts},
whose notation we use again here.
On the events
\[
\cC' \eqdef \bigcap_{(s',a') \in \cS \times \cA} \bigcap_{h' \in [H-1]}
\bigl\{ \ell_{\kappa_{\ell,j,h}(s,a)-1,h'}(s',a') = k_{\kappa_{\ell,j,h}(s,a)-1,h'}(s',a') \bigr\}
\]
considered, the empirical averages $\hatP_{\kappa_{\ell,j,h}(s,a), h'}(\,\cdot \mid s',a')$
have the same distributions as the empirical frequency vectors associated with the i.i.d.\ random variables
\[
\sigma_{h',s',a',j}\,, \qquad j \in \bigl[ 2^{k_{\kappa_{\ell,j,h}(s,a)-1,h'}-1} \bigr]\,,
\qquad \mbox{with distribution} \ \ \ P_{h'}(\,\cdot \mid s',a'),
\]
and are independent from each other as $h',s',a'$ vary.
In particular, $\hatP_{\kappa_{\ell,j,h}(s,a), h}(\,\cdot \mid s,a)$ is distributed as the empirical
frequency vector of $2^{\ell-1}$
i.i.d.\ random variables $\sigma_{h,s,a,j}$, with $j \in [2^{\ell-1}]$.
In addition, Bellman's equations (see the beginning of Section~\ref{sec:defalgo}) show that
on the events $\cC'$ considered to apply optional skipping,
$\hatV_{\kappa_{\ell,j,h}(s,a),h+1}$ only depends on the
$\pi_{\kappa_{\ell,j,h}(s,a),h'}$ with $h' \geq h+1$, on the $\hatP_{\kappa_{\ell,j,h}(s,a), h'}(\,\cdot \mid s',a')$
with $h' \geq h+1$, and on state-action pairs relative to stages $h' \geq h+1$.
Given the form of the adversarial learning strategy used, we conclude that
on the events $\cC'$ considered to apply optional skipping, all the
$\hatV_{\kappa_{\ell,j,h}(s,a),h+1}$, as $s,a$ vary, only depend on state-action pairs of stages $h' \geq h+1$
and are therefore independent from all the $\hatP_{\kappa_{\ell,j,h}(s',a'), h}$, as $s',a'$ vary.

Put differently, optional skipping entails here that for all $\epsilon > 0$,
\begin{align}
\nonumber
& \P \!\left( \Biggl\{ \sum_{(s,a) : \kappa_{\ell,j,h}(s,a) \leq T} \bigl( \hatP_{\kappa_{\ell,j,h}(s,a), h} - P_h \bigr)
\bullet \hatV_{\kappa_{\ell,j,h}(s,a),h+1}(s,a) > \epsilon \Biggr\} \cap \cC' \right) \\
\label{eq:C-2}
\leq \ &
\P \Biggl\{ \sum_{(s,a) : \kappa_{\ell,j,h}(s,a) \leq T} \,\,
\frac{1}{2^{\ell-1}} \sum_{j \in [2^{\ell-1}]} \bigl( \tV_{s,a,h+1}(\sigma_{h,s,a,j}) -
P_h \tV_{s,a,h+1}(s,a) \bigr) > \epsilon \Biggr\},
\end{align}
for some $\tV_{s,a,h+1}$ independent from all the $\sigma_{h,s,a,j}$ as $s,a,j$ vary (and $h$ is fixed).
We recall that the $\sigma_{h,s,a,j}$ are independent from each other as $s,a,j$ vary (and $h$ is fixed).

By the independencies noted above, and by boundedness of the values functions in $[0,(H-h)(H+1)] \subseteq [0,H^2]$,
the Hoeffding--Azuma inequality guarantees that for all $\delta' \in (0,1)$,
\begin{multline}
\label{eq:C-3}
\P \! \left\{ \sum_{(s,a) : \kappa_{\ell,j,h}(s,a) \leq T} \,\,
\frac{1}{2^{\ell-1}} \sum_{j \in [2^{\ell-1}]} \bigl( \tV_{s,a,h+1}(\sigma_{h,s,a,j}) -
P_h \tV_{s,a,h+1}(s,a) \bigr) \right. \\
\left. > \sqrt{\frac{1}{2 \times 2^{\ell-1}}
\sum_{(s,a) : \kappa_{\ell,j,h}(s,a) \leq T} H^4 \ln\biggl(\frac{1}{\delta'}\biggr)} \right\} = \delta'\,.
\end{multline}

We summarize what we proved so far.
Denoting by
\[
\Delta_{T,\ell,j} \eqdef
\sum_{t=1}^T \sum_{(s,a)} \bigl( \hatP_{t, h} - P_h \bigr) \bullet V_{h+1}^{\bpi_t, \br_t + \bb_t, \hatbP_{t}}(s,a)
\, \ind\bigl\{ (s_{t,h}, a_{t,h}) = (s,a) \bigr\} \, \ind\bigl\{ n_{t,h}(s,a) = 2^{\ell-1} + j \bigr\}
\]
the target quantity, and by
\[
\cC \eqdef \bigcap_{(s',a') \in \cS \times \cA} \left(
\bigl\{ \tau_{\ell,j,h}(s',a') = \kappa_{\ell,j,h}(s',a') \bigr\} \bigcap_{t' =\in [T]} \bigcap_{h' \in [H-1]}
\bigl\{ \ell_{t',h'}(s',a') = k_{t',h'}(s',a') \bigr\} \right)
\]
the event associated with the values fixed,
the bounds~\eqref{eq:C-1}--\eqref{eq:C-2}--\eqref{eq:C-3} show that for all $\delta' \in (0,1)$,
\begin{align}
\nonumber
& \P \! \left( \Biggl\{ \Delta_{T,\ell,j} > \sqrt{2H^4 \, \frac{1}{2^{\ell-1}}
\sum_{(s,a)} \ind\bigl\{ n_{T,h}(s,a) \geq 2^{\ell-1} + j \bigr\} \ln\biggl(\frac{1}{\delta'}\biggr)} \Biggr\} \cap \cC
\right)\\
\label{eq:C-Part1}
= \ &
\P \! \left( \Biggl\{ \Delta_{T,\ell,j} > \sqrt{\frac{1}{2^{\ell}}
\sum_{(s,a) : \kappa_{\ell,j,h}(s,a) \leq T} H^2 \ln\biggl(\frac{1}{\delta'}\biggr)} \Biggr\} \cap \cC
\right) \leq \delta'\,.
\end{align}

\emph{Part 2: Union bound and counting the sequences.}
The proof is concluded by counting how many different sets $\cC$ may be obtained.
We do so in a rough way, that will be sufficient for our purposes.
First, we need to count the profile values~\eqref{eq:profiles}.
There are $T(H-1)$ functions $\ell_{t, h} : \cS \times \cA \to \bigl[ \ilog{2}{T} \bigr]^*$,
satisfying some monotonicity constraints, as well as some other constraints which we ignore.
The monotonicity constraints imply that for each $(s,a)$ and $h \in [H-1]$, it is sufficient to determine
the at most $\ilog{2}{T}$ time steps $t$ among $[T]$ when $\ell_{t, h}$ increases by~$1$.
Thus, there are at most
\[
(T+1)^{\ilog{2}{T}}
\]
possible sequences of values for the $\ell_{t, h}(s,a)$ as $t$ varies and $h,s,a$ are fixed.
All in all, the profile part in the number of different sets $\cC$ is smaller than
\[
\bigl( (T+1)^{\ilog{2}{T}} \bigr)^{SA(H-1)}\,.
\]

For stopping times, we need to determine, for each $(s,a)$, a single value, in a set included in
$[T] \cup \{+\infty\}$. We neglect other constraints and see that there are therefore at most $(T+1)^{SA}$ such choices.

As a conclusion, there are at most
\[
M = (T+1)^{SAH \ilog{2}{T}}
\]
different possible values for the sets $\cC$. The proof of Lemma~\ref{lm:main-termC}
is concluded by a union bound over the events \eqref{eq:C-Part1},
with $\delta' = \delta/(4HTM)$.
\end{proof}

\end{document}